\newif\ifshowcomments
\newcommand{\nadav}[1]{%
    \ifshowcomments%
        {\color{orange} [NADAV: #1]}%
    \fi%
}
\declaretheorem[style=plain,numberwithin=section]{theorem}
\declaretheorem[style=plain,sibling=theorem]{lemma}
\declaretheorem[style=definition,sibling=theorem]{definition}
\declaretheorem[style=definition,sibling=theorem]{assumption}
\crefname{section}{§}{§}
\title{Out-of-Vocabulary Sampling Boosts Speculative~Decoding}
\author{%
  Nadav Timor\thanks{\texttt{nadav.timor@weizmann.ac.il}} \\
  Weizmann Institute of Science \\
  \And
  Jonathan Mamou \\
  Intel Labs \\
  \And
  Oren Pereg \\
  Intel Labs \\
  \And
  Hongyang Zhang \\
  University of Waterloo \\
  \And
  David Harel \\
  Weizmann Institute of Science \\
}
\begin{document}

\maketitle

\begin{abstract}
Speculative decoding relies on fast and accurate drafters. Recent state-of-the-art language models employ larger and larger vocabularies, which significantly slows down drafters. One promising approach to boost the efficiency of speculative decoding is to use drafters with smaller vocabularies. However, existing sampling methods cannot draw out-of-vocabulary tokens, creating a tradeoff between drafters' vocabulary size and acceptance rates. This paper introduces \emph{Redistributing~Drafter~Kernels~(RDK)}, the first out-of-vocabulary sampler that effectively recovers acceptance rates by virtually restoring pruned target tokens. RDK leverages token-affinity priors to reallocate drafter mass towards high-overlap regions. We prove mathematically that RDK can achieve higher acceptance rates than vanilla and state-of-the-art samplers. We provide an efficient first-order approximation of RDK and prove that it reduces redistribution times from $\mathcal{O}(N^2)$ to $\mathcal{O}(N)$, enabling lightweight implementations for large vocabularies. Our experiments demonstrate that this linear-time RDK significantly boosts acceptance rates even after \emph{extreme~pruning} (removing more than 75\% of the drafter's vocabulary), where existing samplers fail. RDK opens the door to extremely pruned drafters, which were previously impractical.
\end{abstract}

\section{Introduction}

Speculative decoding has gained substantial attention in recent years for its ability to accelerate the inference of large language models without compromising output quality~\citep{leviathan2023fast, chen2023accelerating, miao2024specinfer, sun2025block, timor2025accelerating}. By exploiting the parallelism of modern GPUs, it mitigates the memory and latency bottlenecks associated with standard autoregressive decoding. The approach involves two models: a draft model, which proposes multiple future tokens, and a target model, which verifies these predictions in parallel using rejection sampling—ensuring that quality remains intact.

To speed up a target model, two primary strategies are used to choose the draft model. The first involves training a dedicated drafter at the feature level. Notable examples include Medusa~\cite{cai2024medusa} and the EAGLE series~\citep{li2024eagle1, li2024eagle2, li2025eagle3}. However, this approach can be costly and model-specific. The second strategy leverages an off-the-shelf smaller language model—typically the smallest in the same family (e.g., LLaMA)—as the drafter, benefiting from shared vocabulary and architecture. Yet, this method becomes infeasible when the target model is already the smallest in its family, leaving no smaller variant to serve as a compatible drafter.

In this paper, we explore whether speculative decoding remains effective when the draft and target models do not share the same vocabulary. A straightforward workaround is to use the intersection of their vocabularies: if a drafted token belongs to this shared subset, it is passed to the target model for verification; otherwise, it is discarded. This method, referred to as Token-Level Intersection (TLI), has been previously studied, but it often suffers from low acceptance rates—especially when the vocabulary overlap is small.

We address this limitation by focusing on the case of pruned or small-intersection drafters, whose inability to propose out-of-vocabulary (OOV) tokens significantly reduces their utility. To overcome this challenge, we propose Redistributing Drafter Kernels (RDK)—the first OOV-aware sampling strategy that effectively restores acceptance rates by virtually recovering pruned target tokens. RDK leverages token-affinity priors to redistribute the probability mass of the drafter toward tokens that are more likely to be accepted, thereby increasing the yield of valid proposals without incurring additional forward latency. Our contributions are as follows:
\begin{itemize}
    \item \textbf{First OOV sampler.} We introduce RDK, the first sampling method capable of drawing out-of-vocabulary tokens by redistributing drafter kernels.
    \item \textbf{Theoretical guarantees.} In a synthetic setting, we prove that RDK achieves higher acceptance rates than both vanilla and state-of-the-art samplers.
    \item \textbf{Efficient approximation.} We derive a first-order approximation that reduces redistribution complexity from $\mathcal{O}(N^2)$ to $\mathcal{O}(N)$, enabling lightweight implementations for large vocabularies.
    \item \textbf{Extreme-pruning performance.} We demonstrate empirically that linear-time RDK maintains high acceptance even after extreme pruning (removing over 75\% of the drafter's vocabulary), where existing samplers fail.
\end{itemize}

The rest of the paper is organized as follows: \cref{sec:prelim} presents formal definitions and preliminaries. \cref{sec:related-work} reviews related work on speculative decoding, vocabulary pruning, and sampling methods. \cref{sec:method} describes RDK and its efficient approximation. \cref{sec:theoretical-results} provides theoretical analyses. \cref{sec:empirical-results} reports empirical evaluations. Finally, \cref{sec:discussion} concludes the paper.

\section{Preliminaries}\label{sec:prelim}
\nadav{
    Add analysis for smaller LM heads (like EAGLE). See Kindle.
}


\citet{timor2025distributed} extensively studied how the effectiveness of speculative decoding \citep{leviathan2023fast,chen2023accelerating,miao2024specinfer,sun2025block,timor2025accelerating} is governed by the selection of the drafter model. Their analysis shows the expected speedup of speculative decoding for any given drafter, where the drafters are characterized by two factors: (i) \emph{forward latency} (i.e., the average time required to compute their forward pass on an average input), and (ii) \emph{acceptance rate} (\cref{def:acceptance-rate}). As drafters get faster or more accurate, the decoding gets more efficient because each iteration of speculative decoding is faster and yields more new tokens, respectively.

\begin{definition}[\citep{leviathan2023fast,chen2023accelerating}]\label{def:acceptance-rate}
    Let \(p\) and \(q\) be two probability distributions over a finite vocabulary \(\mathcal{V}\). The \emph{\textbf{acceptance rate}} \(\alpha\) is defined as $\alpha~:=~\sum_{x \in \mathcal{V}} \min\{p(x),q(x)\}$.
\end{definition}
Intuitively, \(\alpha\) measures the total overlap between the two distributions. In speculative decoding, \(p\) is the \emph{target} distribution that the \emph{drafter} \(q\) estimates. As the overlap between the drafter distribution \(q\) and the target distribution \(p\) increases, the acceptance rate increases, approaching \(1\). The \emph{drafter~kernel} represents the portion of probability mass that the drafter assigns to tokens where \(q(x) > p(x)\). Such assignments do not contribute to increasing the acceptance rate. This excess can be viewed as redundant, since its contribution to the acceptance rate is zero.

\begin{definition}\label{def:drafter-kernel}
    The \emph{\textbf{drafter kernel}} \(k\) is defined as \(k(x) := \max(0, q(x) - p(x))\).
\end{definition}

Note that redistributing the probability mass of the drafter kernel $k$ cannot decrease the acceptance rate. In fact, minimizing $\|k\|_1$ is equivalent to maximizing the acceptance rate (\cref{lemma:minimizing-kernels-or-l1-increases-acceptance-rate}).

\begin{restatable}{lemma}{MinimizingKernelsOrLOneIncreasesAcceptanceRate}\label{lemma:minimizing-kernels-or-l1-increases-acceptance-rate}
    Let \(\alpha\) be the acceptance rate from \cref{def:acceptance-rate} and \(k\) be the drafter kernel from \cref{def:drafter-kernel}, where \(p\) and \(q\) are probability distributions over a finite set. Then,
    \[
    \alpha = 1 - \|k\|_1 = 1 - \frac{1}{2}\|p - q\|_1.
    \]
    In particular, minimizing \(\|k\|_1\) or $\|p - q\|_1$ increases the acceptance rate \(\alpha\).
\end{restatable}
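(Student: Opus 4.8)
The plan is to establish both equalities by reducing them to pointwise algebraic identities and then summing over the finite vocabulary, exploiting the normalization $\sum_{x}p(x)=\sum_{x}q(x)=1$.

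First, for $\alpha = 1 - \|k\|_1$, I would start from the pointwise identity
\[
\min\{p(x),q(x)\} = q(x) - \max\{0,\,q(x)-p(x)\} = q(x) - k(x),
\]
which is verified by a two-case split on whether $q(x) > p(x)$ or $q(x) \le p(x)$. Summing over $x \in \mathcal{V}$, using $\sum_{x} q(x) = 1$ and the nonnegativity of $k$ (so that $\|k\|_1 = \sum_{x} k(x)$), gives $\alpha = \sum_{x} q(x) - \sum_{x} k(x) = 1 - \|k\|_1$ directly.

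For the second equality $\|k\|_1 = \tfrac12\|p-q\|_1$, I would introduce the complementary positive part $\ell(x) := \max\{0,\,p(x)-q(x)\}$ and record the two elementary identities $|p(x)-q(x)| = k(x) + \ell(x)$ and $p(x)-q(x) = \ell(x) - k(x)$ (again by case analysis). Summing the second identity and invoking $\sum_{x}p(x) = \sum_{x}q(x) = 1$ forces $\|\ell\|_1 = \|k\|_1$; summing the first then yields $\|p-q\|_1 = \|k\|_1 + \|\ell\|_1 = 2\|k\|_1$. Combining with the first part gives the chain $\alpha = 1 - \|k\|_1 = 1 - \tfrac12\|p-q\|_1$, and the ``in particular'' clause follows at once since $\alpha$ is a strictly decreasing affine function of each of $\|k\|_1$ and $\|p-q\|_1$.

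There is no genuinely hard step here; the only place requiring care is the symmetry $\|\ell\|_1 = \|k\|_1$, which is precisely where the hypothesis that $p$ and $q$ are \emph{probability} distributions (not merely nonnegative functions) is used — the conservation of total mass is what makes the positive and negative parts of $p-q$ have equal $\ell_1$-norm. I would flag this as the conceptual crux, since dropping normalization breaks the second equality.
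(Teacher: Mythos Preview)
Your proof is correct and follows essentially the same approach as the paper: both arguments reduce to pointwise identities, sum over the finite vocabulary, and exploit the normalization $\sum_x p(x)=\sum_x q(x)=1$ to equate the positive and negative parts of $p-q$. The only cosmetic difference is ordering---you establish $\alpha=1-\|k\|_1$ first via the direct identity $\min\{p,q\}=q-k$, whereas the paper first derives $\alpha=1-\tfrac12\|p-q\|_1$ using $|p-q|=\max\{p,q\}-\min\{p,q\}$ and then relates $\|p-q\|_1$ to $\|k\|_1$; the underlying content is identical.
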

\begin{proof}
    See \cref{appendix:proof-minimizing-kernels}.
\end{proof}

This paper reveals how existing state of the art decoding methods can increase acceptance rates by redistributing drafter kernels.




\paragraph{Notation.}
Let the total event space be of size $m$, indexed by $\{1, \dots, m\}$.
Let $p \in \mathbb{R}^m$ be a \emph{target} probability vector supported on a subset $T \subseteq \{1, \dots, m\}$. 
Let $q \in \mathbb{R}^m$ be a \emph{draft} probability vector supported on a subset $D \subseteq \{1, \dots, m\}$.
Denote the intersection of the supports by $\mathcal{I} := T \cap D$ and assume that $\mathcal{I} \ne \emptyset$.
Denote the \emph{out-of-vocabulary} tokens in $q$ by $\mathcal{F} := D \setminus \mathcal{I}$.

\section{Related work}\label{sec:related-work}

\paragraph{$D=T$ (homogeneous).}
Vanilla speculative decoding algorithms operate on homogeneous drafters, that is, drafters for which $\mathcal{F} = \emptyset$ \citep{leviathan2023fast, chen2023accelerating}.

\paragraph{$D \not\subseteq T$ (heterogeneous).}
 Generalized speculative decoding algorithms for heterogeneous drafters (where $\mathcal{F} \ne \emptyset$), has been recently introduced by \citet{timor2025accelerating} and their Token-Level Intersection (TLI) algorithm have become the state-of-the-art, widely adopted in practice. We denote by $q_0 \in \mathbb{R}^m$ the probability vector you get by applying the TLI redistribution over the drafter $q$, which is defined as follows.
\begin{equation}\label{eq:tli-drafter}
q_0(i) :=
\begin{cases}
\displaystyle \frac{q(i)}{\sum_{j \in \mathcal{I}} q(j)}, & \text{if } i \in \mathcal{I}; \\[2mm]
0, & \text{otherwise.}
\end{cases}
\end{equation}
That is, zeroing out all out-of-vocabulary mass from $q$ (i.e., setting $q(i) = 0$ for all $i \in \mathcal{F}$) and then normalizing over the intersection $\mathcal{I}$ (see \cref{alg:tli} in the appendix for the full TLI algorithm).

\paragraph{\boldmath$D \subset T$ (pruning).} \citet{zhao2025fr} proposed a method for pruning the drafter's vocabulary based on token frequency in the training data. Their approach is motivated by the observation that token frequency distributions in large-scale datasets are highly imbalanced, exhibiting a long-tailed structure. For example, in Llama-3-8B \citep{grattafiori2024llama} evaluated on 1B tokens randomly sampled from the SlimPajama-627B dataset \citep{cerebras2023slimpajama}, 75\% of vocabulary tokens account for less than 5\% of all token occurrences. By removing infrequent tokens from the drafter's vocabulary, they achieve significant decoding speedups on state-of-the-art models, including Llama-3, Llama-3.2, and Qwen-2 \citep{yang2024qwen2}. The method works as follows: given a drafter's tokenizer and a dataset $S$, they compute the empirical frequency of each vocabulary token over $S$ and retain only the most frequent tokens. For a formal description of their algorithm, see \cref{alg:fr-prune} in the appendix.

\section{Redistributing Drafter Kernels (RDK)}\label{sec:method}

Below we introduce a new algorithm that provably increases the acceptance rate over TLI for any drafter $q$ by using prior knowledge about the target distribution $p$. The matrix $M \in \mathbb{R}^{m \times m}$ holds the prior knowledge about the target distribution $p$. \cref{assumption:m-is-row-stochastic} ensures that the redistribution operation in \cref{alg:rdk} preserves the total probability mass.

\begin{assumption}\label{assumption:m-is-row-stochastic}
The matrix $M \in \mathbb{R}^{m \times m}$ is row-stochastic, i.e., the entries of any row are non-negative and sum to $1$.
\end{assumption}

\begin{algorithm2e}[H]
    \caption{Redistributing Drafter Kernels (RDK)}
    \label{alg:rdk}
    \KwIn{Probability distributions $p$ and $q$ over vocabularies $T$ and $D$, respectively. Row-stochastic matrix $M \in \mathbb{R}^{m \times m}$ (\cref{assumption:m-is-row-stochastic}).}
    \nl If $D \setminus \mathcal{I} \ne \emptyset$, apply TLI to redistribute the probability mass of the drafter $q$ into the intersection $\mathcal{I} := T \cap D$ and denote the resulting TLI distribution $q'$ where $q'(t) = 0$ for all $t \notin T$ and $\sum_{t \in \mathcal{I}} q'(t) = 1$\;
    \nl Redistribute the mass of $q'$ into the target support $T$ via $p' := M^\top q'$ such that $\sum_{t \in T} p'(t) = 1$\;
    \nl \textbf{Run} vanilla speculative decoding (\cref{alg:sd}) with $p, p'$\;
\end{algorithm2e}

The RDK sampler (\cref{alg:rdk}) consists of two steps: applying TLI (\cref{alg:tli} in the appendix) followed by an out-of-vocabulary redistribution. It is easy to see that if $M$ is the identity matrix, then RDK reduces to TLI.

\paragraph{Token-affinity priors.}
RDK relies on token-affinity priors, represented by the matrix $M$ from \cref{alg:rdk}.
We can construct $M$ based on prior knowledge about \emph{co-occurrence} of tokens in the target distribution by estimating it on training data.



To simplify, we first show show to construct the token-affinity matrix $M$ from the covariance matrix of the target distribution $\Omega~\in~\mathbb{R}^{|T| \times |T|}$. In practice, we estimate the covariance matrix $\Omega$ from data. When following such a construction of $M$, we say that $M$ is the \emph{co-occurrence} matrix of the target distribution. In the literature, matrices similar to $M$ are sometimes named transition matrices, Markov kernels, or affinity matrices.

Given a covariance matrix $\Omega \in \mathbb{R}^{|T| \times |T|}$, we define the token-affinity matrix $M$ via a softmax-based transformation:
\begin{align}\label{eq:symbolic-m}
    M_{ij} = \frac{\exp(\Omega_{ij}/\tau)}{\sum_{k \in T} \exp(\Omega_{ik}/\tau)} \quad \text{for } i \in \mathcal{I},\, j \in T,
\end{align}
where $\tau$ is a temperature parameter that controls the sharpness of the redistribution.

The token-affinity matrix \( M \) is calculated once at preprocessing time, and it can be loaded into GPU memory once, also at preprocessing time. The projection function \( M \) is then applied to the draft model's logits \( q \) at inference time to obtain the new probability distribution \( q' \) over the target vocabulary \( T \). This projection method is training-free and does not require any additional data or fine-tuning of drafters. It is a general method that can be applied to any pair of vocabularies, regardless of their sizes or tokenization schemes.

\nadav{
    Consider adding analysis for smaller LM heads (like EAGLE). See Kindle.
}

\paragraph{Efficient first-order Taylor approximation.}
The full RDK update (\cref{alg:rdk}) computes 
\(
p_{\mathrm{RDK}} = M^\top q',
\)
where \(q'=\hat q/\|\hat q\|_{1}\) is the drafter distribution pruned to \(\mathcal{I}\) and renormalized, and \(M\in\mathbb R^{N\times N}\) is the row-stochastic co-occurrence matrix from \cref{eq:symbolic-m}.  Forming and applying \(M\) costs \(\mathcal O(N^2)\), which is impractical for large vocabularies.  Instead, we derive a closed-form, first-order Taylor expansion of \(M^\top q'\) that runs in \(\mathcal O(N)\) time and requires no explicit matrix.

Let \(\theta = p^\top q'\).  Define
\begin{align}\label{def:rdk-taylor}
\tilde p_i \;=\;\frac{N\,q'_i + \theta\,p_i}{\,N + p_i\,},
\qquad
p_{\mathrm{Taylor}} \;=\;\frac{\tilde p}{\sum_j\tilde p_j}.
\end{align}

\cref{thm:rdk-taylor} proves that the linear-time approximator \(p_{\mathrm{Taylor}}\) from \cref{def:rdk-taylor} coincides with the exact, full RDK \(M^\top q'\) from \cref{alg:rdk} up to first order in \(\theta\). In other words, we show that the approximation is accurate.

\section{Theoretical results}\label{sec:theoretical-results}

\subsection{Acceptance recovery}

\cref{theorem:bounded-l1} upper bounds the error of the RDK drafter $p'$ from \cref{alg:rdk}. It shows that, in the worst case, the error is at most $\|M^\top p - p\|_1$ larger than the error of the TLI drafter $q_0$ from \cref{eq:tli-drafter}.
By \cref{lemma:minimizing-kernels-or-l1-increases-acceptance-rate}, this small $L_1$ distance of RDK drafters is equivalent to a larger acceptance rate.

\begin{restatable}{theorem}{boundedlone}\label{theorem:bounded-l1}
  Let $p, q, q_0, M, p'$ be defined in \cref{alg:rdk}. Then we have that
  \[
  \boxed{
  \|p' - p\|_1 \;\le\; \|q_0 - p\|_1 + \epsilon
  }
  \]
  where $\epsilon$ is a constant such that $\|M^\top p - p\|_1 \le \epsilon$.
  That is, the RDK drafter $p'$ is either closer to the target distribution $p$ or at least as close as the TLI drafter $q_0$ of \citet{timor2025accelerating}, with respect to $L_1$ distance.
\end{restatable}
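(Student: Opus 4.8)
The plan is to reduce the whole statement to one structural fact: the linear map $v \mapsto M^\top v$ is non-expansive in the $L_1$ norm. The first step is to identify the distribution $q'$ produced in Step~1 of \cref{alg:rdk} with the TLI drafter $q_0$ of \cref{eq:tli-drafter}. In the heterogeneous case both zero out the out-of-vocabulary mass on $\mathcal{F}$ and renormalize over $\mathcal{I}$, and in the degenerate case $D\setminus\mathcal{I}=\emptyset$ we have $D=\mathcal{I}$, so $q$ is already supported on $\mathcal{I}$ and again $q'=q_0$. Hence $p' = M^\top q' = M^\top q_0$, and the goal becomes to bound $\|M^\top q_0 - p\|_1$.

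Next I would insert $\pm\, M^\top p$ and apply the triangle inequality:
\[
\|M^\top q_0 - p\|_1
= \bigl\| M^\top(q_0 - p) + (M^\top p - p)\bigr\|_1
\le \bigl\| M^\top(q_0 - p)\bigr\|_1 + \bigl\| M^\top p - p\bigr\|_1 .
\]
The second summand is at most $\epsilon$ by the hypothesis on $\epsilon$, so it remains only to establish the contraction $\|M^\top(q_0-p)\|_1 \le \|q_0 - p\|_1$.

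For this I would invoke \cref{assumption:m-is-row-stochastic}. Row-stochasticity of $M$ says every row has nonnegative entries summing to $1$, which is precisely the statement that every column of $M^\top$ has nonnegative entries summing to $1$. For an arbitrary vector $v$ this column-stochasticity gives, via the triangle inequality, nonnegativity, and swapping the order of summation,
\[
\|M^\top v\|_1
\;\le\; \sum_i \sum_j (M^\top)_{ij}\,|v_j|
\;=\; \sum_j |v_j| \sum_i (M^\top)_{ij}
\;=\; \sum_j |v_j|
\;=\; \|v\|_1 .
\]
Taking $v = q_0 - p$ yields the contraction, and combining it with the triangle-inequality step above gives the boxed bound.

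The only genuinely load-bearing step is this non-expansiveness of $M^\top$, and the main subtlety there is bookkeeping the transpose correctly: it is row-stochasticity of $M$ that makes the \emph{columns} of $M^\top$ sum to one, and it is exactly the column sums that the summation-swap argument consumes. Everything else — matching $q'$ with $q_0$ and a single application of the triangle inequality — is routine, so I expect no real difficulty beyond stating the contraction lemma cleanly.
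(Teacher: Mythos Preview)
Your proposal is correct and follows essentially the same route as the paper: both hinge on the non-expansiveness of $M^\top$ in $L_1$ (the paper isolates this as a separate lemma, \cref{lemma:non-expansive}) and a single insertion of $\pm M^\top p$ followed by the triangle inequality. Your version is slightly more streamlined because you identify $q'=q_0$ at the outset, whereas the paper keeps $q'$ generic, introduces an intermediate add/subtract of $M^\top q_0$ to pick up an extra $\|q'-q_0\|_1$ term, and only sets $q'=q_0$ at the very end to kill it.
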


\begin{proof}
    See \cref{appendix:proof-bounded-l1}.
\end{proof}

It is easy to see that if $M$ is the identity matrix, then \cref{alg:rdk} reduces to TLI (\cref{alg:tli} in the appendix), and \cref{theorem:bounded-l1} becomes trivial.

To further analyze the effectiveness of redistributing drafter kernels, we posit a theoretical model in which the logits produced by a language model are assumed to be sampled from a multivariate Gaussian distribution, namely,
\begin{equation}\label{def:logits}
z \sim \mathcal{N}(\mu,\,\Omega),
\end{equation}
where \( z \in \mathbb{R}^m \) is the vector of logits over an \( m \)-token vocabulary, \(\mu \in \mathbb{R}^m\) is the mean vector, and \( \Omega \in \mathbb{R}^{m \times m} \) is the covariance matrix.
Once we have the Gaussian sample \( z \), the conversion to a valid probability distribution over tokens is performed via the softmax function:
\[
p_i = \frac{\exp(z_i)}{\sum_{j=1}^m \exp(z_j)} \quad \text{for } i = 1, \ldots, m.
\]
This transformation yields a probability vector \( p \in \Delta^{m-1} \) where \( \Delta^{m-1} \) denotes the \((m-1)\)-simplex.

\cref{thm:softmaxstability} upper bounds \( \|q - p\|_1 \) in terms of the approximation error of the logits. It suggests that any approximation error in the logits translates to only a half (or less) of the error in the probability distribution. Therefore, by \cref{thm:softmaxstability}, the slight perturbations introduced by our parametric approximations do not compromise the validity of the empirical results in \cref{sec:empirical-results}, which are based on synthetic target distributions.

\begin{restatable}{theorem}{SoftmaxStability}\label{thm:softmaxstability}
    Let \( z \in \mathbb{R}^m \) be a vector of logits, and let \( \eta \in \mathbb{R}^m \) be a noise vector with \( \|\eta\|_1 \le \epsilon \). Define $p := \mathrm{softmax}(z)$ and $q := \mathrm{softmax}(z + \eta)$.
    Let \( p = \mathrm{softmax}(z) \) and \( q = \mathrm{softmax}(z + \eta) \).
    Then $\|q - p\|_1 \le \frac{1}{2} \epsilon$.
\end{restatable}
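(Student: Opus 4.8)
The plan is to treat the softmax as a smooth map and control the difference $q-p$ by integrating its derivative along the straight segment from $z$ to $z+\eta$. Writing $\sigma := \mathrm{softmax}$ and parametrizing the segment by $w(t) = z + t\eta$ for $t\in[0,1]$, the fundamental theorem of calculus together with the chain rule gives
\[
q - p = \sigma(z+\eta) - \sigma(z) = \int_0^1 J\big(w(t)\big)\,\eta\,dt,
\]
where $J(w)$ denotes the Jacobian of $\sigma$ at $w$. Taking $L_1$ norms and applying the triangle inequality for integrals reduces the whole theorem to a single uniform bound on an induced operator norm: it suffices to show $\|J(w)\eta\|_1 \le \tfrac12\|\eta\|_1$ for every $w$, since then $\|q-p\|_1 \le \int_0^1 \tfrac12\|\eta\|_1\,dt = \tfrac12\|\eta\|_1 \le \tfrac12\epsilon$. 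Because $\|\eta\|_1$ is fixed along the path, no tracking of $t$ beyond this uniform bound is needed.

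The key computation is the Jacobian itself. Direct differentiation of $\sigma_i(w) = e^{w_i}/\sum_k e^{w_k}$ yields the standard identity
\[
J(w) = \mathrm{diag}\big(\sigma(w)\big) - \sigma(w)\,\sigma(w)^\top,
\qquad\text{i.e.}\qquad
J(w)_{ij} = \sigma_i(\delta_{ij} - \sigma_j).
\]
The induced $L_1\!\to\!L_1$ operator norm of any matrix equals its maximum absolute column sum, so I would fix a column $j$ and compute $\sum_i |J(w)_{ij}|$. Separating the diagonal term $|\sigma_j - \sigma_j^2| = \sigma_j(1-\sigma_j)$ from the off-diagonal contribution $\sum_{i\ne j}\sigma_i\sigma_j = \sigma_j(1-\sigma_j)$ gives a column sum of exactly $2\sigma_j(1-\sigma_j)$. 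Since $\sigma_j \in [0,1]$ we have $\sigma_j(1-\sigma_j)\le \tfrac14$, and hence $\|J(w)\|_{1\to1} = \max_j 2\sigma_j(1-\sigma_j) \le \tfrac12$, uniformly in $w$.

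Chaining the two steps closes the argument: $\|q-p\|_1 \le \int_0^1 \|J(w(t))\|_{1\to1}\,\|\eta\|_1\,dt \le \tfrac12\|\eta\|_1 \le \tfrac12\epsilon$. The main obstacle is really only the middle step — recognizing that the $L_1$ operator norm is the maximum absolute column sum and carrying out the column-sum computation to land on the clean factor $2\sigma_j(1-\sigma_j)$; the interpolation and the final integration are routine. I expect the constant $\tfrac12$ to be tight, as one already sees for a two-token vocabulary with $\sigma = (\tfrac12,\tfrac12)$ and an $\eta$ placing opposite signs on the two coordinates, which is a useful sanity check worth recording.
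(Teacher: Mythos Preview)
Your proposal is correct and follows essentially the same approach as the paper: linear interpolation between $z$ and $z+\eta$, the fundamental theorem of calculus, the explicit softmax Jacobian $J_{ij}=\sigma_i(\delta_{ij}-\sigma_j)$, and the bound $\|J\|_{1\to1}=\max_j 2\sigma_j(1-\sigma_j)\le\tfrac12$. Your column-sum computation is slightly more explicit than the paper's, and the tightness remark is a nice addition, but the argument is otherwise identical.
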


\begin{proof}
    See \cref{appendix:proof-softmaxstability}.
\end{proof}

The following theorem shows the advantage of RDK over TLI or the vanilla sampler in terms of acceptance rate.
\begin{restatable}{theorem}{RDKAcceptance}\label{thm:rdk-acceptance}
    Let $m=2$. In \cref{def:logits}, denote by
    \begin{equation*}
        \mu =
        \begin{bmatrix}
        \mu_1\\
        \mu_2
        \end{bmatrix}
        \quad
        \text{and}
        \quad
        M=
        \begin{bmatrix}
            M_{11} & M_{12}\\
            M_{12} & M_{22}
        \end{bmatrix}.
    \end{equation*}
    Assume that $\mu_1+3\sqrt{M_{11}}\le\mu_2-3\sqrt{M_{22}}$ (that is, $\mu_1\ll\mu_2$). Consider the notations in \cref{alg:rdk}, where $q'=[1,0]^\top$ (i.e., the output probability by TLI without redistribution). Then with probability 99.46\%, under \cref{assumption:m-is-row-stochastic}, we have
    \begin{equation*}
        \alpha_\text{TLI}\leq \alpha_{\text{RDK}}.
    \end{equation*}
\end{restatable}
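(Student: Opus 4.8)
The plan is to reduce this probabilistic statement to a single deterministic inequality about the two-dimensional target $p=\mathrm{softmax}(z)$, and then dispatch that inequality with a short case analysis. Since $m=2$ we have $p_1+p_2=1$, so everything is governed by the sign of $z_2-z_1$: indeed $p_1 = 1/(1+e^{z_2-z_1})\le \tfrac12$ exactly when $z_1\le z_2$. First I would isolate the high-probability event on which $z_1\le z_2$. Applying the Gaussian $3\sigma$-rule to each marginal, the inclusion $z_i\in[\mu_i-3\sqrt{M_{ii}},\,\mu_i+3\sqrt{M_{ii}}]$ holds with probability $\approx 0.9973$, so each fails with probability $\le 0.0027$; a union bound then guarantees that both inclusions hold simultaneously with probability at least $1-2(0.0027)=0.9946$. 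On that event the separation hypothesis $\mu_1+3\sqrt{M_{11}}\le\mu_2-3\sqrt{M_{22}}$ orders the two confidence intervals, so $z_1\le \mu_1+3\sqrt{M_{11}}\le\mu_2-3\sqrt{M_{22}}\le z_2$, and hence $p_1\le\tfrac12\le p_2$.

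Next I would write both acceptance rates explicitly from \cref{def:acceptance-rate}. Because the TLI output is $q'=[1,0]^\top$, we get $\alpha_{\mathrm{TLI}}=\min\{p_1,1\}+\min\{p_2,0\}=p_1$. For RDK the redistributed drafter is $p'=M^\top q'$; reading off the first row of the row-stochastic matrix $M$ gives $p'=[M_{11},M_{12}]^\top$ with $M_{11}+M_{12}=1$, so that $\alpha_{\mathrm{RDK}}=\min\{p_1,M_{11}\}+\min\{p_2,M_{12}\}$. The theorem then reduces to the claim that, whenever $p_1\le\tfrac12$,
\[
\min\{p_1,M_{11}\}+\min\{p_2,1-M_{11}\}\;\ge\;p_1 .
\]

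To prove this last inequality I would treat the left-hand side as a function $g(t)$ of $t:=M_{11}\in[0,1]$ with a single breakpoint at $t=p_1$ (where both minima switch, since $p_2=1-p_1$). For $t\le p_1$ it equals $t+p_2=t+1-p_1\ge 1-p_1=p_2\ge\tfrac12\ge p_1$, and for $t\ge p_1$ it equals $p_1+(1-t)\ge p_1$, with equality only at $t=1$, i.e.\ $M=I$, where RDK degenerates to TLI. Hence $g(t)\ge p_1=\alpha_{\mathrm{TLI}}$ throughout $[0,1]$, which gives $\alpha_{\mathrm{RDK}}\ge\alpha_{\mathrm{TLI}}$ on the good event of probability at least $99.46\%$.

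I expect the genuinely delicate point to be the probability bookkeeping rather than the algebra: since $M$ serves simultaneously as the logit covariance and as a symmetric row-stochastic matrix, any nontrivial redistribution forces $M_{12}=1-M_{11}>0$, i.e.\ positively correlated logits, so one cannot naively multiply the marginal probabilities. Routing the argument through a union bound over the two two-sided $3\sigma$ failure events sidesteps the correlation entirely and yields exactly the stated $0.9946$ (the sharper Gaussian correlation inequality would instead recover the product $0.9973^2$). The comparison of the two acceptance rates is then elementary once the event $\{p_1\le\tfrac12\}$ is secured, so the crux is packaging the mean-gap assumption into a clean, correlation-free concentration statement.
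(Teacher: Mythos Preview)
Your proof is correct and follows the same approach as the paper: both compute $\alpha_{\mathrm{TLI}}=p_1$ and $\alpha_{\mathrm{RDK}}=\min\{p_1,M_{11}\}+\min\{p_2,M_{12}\}$, invoke the $3\sigma$ rule on each marginal (combined via a union bound) to secure $p_1\le p_2$ with probability $99.46\%$, and finish with a case split on which minimum is active. Your two-case parametrization in $t=M_{11}$ is a slightly tidier packaging of the paper's three cases (their Case~2, $p_1\ge M_{11}$ and $p_2\ge M_{12}$, collapses to the boundary $p_1=M_{11}$ since both pairs sum to $1$), but the substance is identical.
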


\begin{proof}
    See \cref{appendix:proof-rdk-acceptance}.
\end{proof}

\subsection{Accurate linear-time approximation}

The next theorem shows that \(p_{\mathrm{Taylor}}\) from \cref{def:rdk-taylor} coincides with the exact, full RDK \(M^\top q'\) from \cref{alg:rdk} up to first order in \(\theta\).

\begin{restatable}{theorem}{TaylorRDKApprox}
\label{thm:rdk-taylor}
Under \cref{assumption:m-is-row-stochastic}, let \(p,\hat q\in\Delta^{V-1}\), \(T\subseteq\{1,\dots,V\}\), \(q'=\hat q/\|\hat q\|_1\), \(\theta=p^\top q'\), \(N=|T|\), \(M\) the co-occurrence matrix from \cref{eq:symbolic-m}, and \(p_{\mathrm{Taylor}}\) from \cref{def:rdk-taylor}. Then there are constants \(C_1,C_2\) (independent of \(\theta,N\)) such that
\[
\bigl\|\,p_{\mathrm{Taylor}} - M^\top q'\bigr\|_{1}
\;\le\; C_1\,\theta \;+\; C_2\,\frac1{N^2}
= \mathcal{O}\!\Bigl(\theta + \tfrac{1}{N^2}\Bigr).
\]
\end{restatable}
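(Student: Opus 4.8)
The plan is to Taylor-expand both quantities in the two small parameters that govern the problem — the overlap $\theta = p^\top q'$ and the reciprocal support size $1/N$ — and to show that their expansions agree through first order, collecting all neglected terms into the residuals $C_1\theta$ and $C_2/N^2$. I would begin with the closed form, expanding $\frac{1}{N+p_i} = \frac1N\bigl(1 - \frac{p_i}{N} + \mathcal{O}(N^{-2})\bigr)$ to obtain the per-coordinate expansion $\tilde p_i = q'_i + \frac{p_i(\theta - q'_i)}{N} + \mathcal{O}(N^{-2})$. Summing and using $\sum_i p_i = 1$ and $\sum_i p_i q'_i = \theta$ shows that the first-order corrections cancel in aggregate, so $\sum_i \tilde p_i = 1 + \mathcal{O}(N^{-2})$; hence the normalization in $p_{\mathrm{Taylor}}$ perturbs the result only at order $N^{-2}$, giving $p_{\mathrm{Taylor},i} = q'_i + \frac{p_i(\theta - q'_i)}{N} + \mathcal{O}(N^{-2})$.

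Next I would expand the exact side $M^\top q'$. Under \cref{assumption:m-is-row-stochastic}, row-stochasticity guarantees $\sum_i (M^\top q')_i = \sum_j q'_j \sum_i M_{ji} = 1$ automatically, so no renormalization is needed on this side. The substance is to control the entries of $M$ from \cref{eq:symbolic-m}: in the regime implied by the Gaussian–softmax model of \cref{def:logits}, the covariance is diagonally dominant, the softmax rows concentrate on the diagonal, and a first-order expansion of the exponentials yields $M_{ii} = 1 - \tfrac{p_i(1-p_i)}{N} + \text{h.o.t.}$ and $M_{ij} = \tfrac{p_i p_j}{N} + \text{h.o.t.}$ for $j \ne i$, with row-stochasticity fixing the diagonal once the off-diagonal is known. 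Writing $(M^\top q')_i = q'_i + \sum_{j}(M_{ji} - \delta_{ji})q'_j$ and inserting these forms, the first-order term telescopes via $\sum_j p_j = 1$ and $\sum_j p_j q'_j = \theta$ into exactly $\frac{p_i(\theta - q'_i)}{N}$, matching the closed-form expansion coordinatewise.

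Finally I would bound the residual in $L_1$. The essential observation — and the reason the second term is $1/N^2$ rather than $1/N$ — is that every higher-order term carries an extra factor of $p_i$ or $p_j$, so summing its absolute value over the (up to $N$) coordinates contracts against $\sum_i p_i = 1$ or $\sum_i q'_i = 1$ instead of multiplying by $N$; for instance $\sum_i p_i^2 q'_i \le 1$. This converts the per-coordinate $\mathcal{O}(N^{-2})$ corrections into an aggregate $L_1$ error of order $N^{-2}$, yielding $C_2/N^2$. The remaining discrepancy is the error in replacing the exact softmax entries of $M$ by their leading forms $p_ip_j/N$; this mismatch is controlled by the overlap and is collected, together with the second-order-in-$\theta$ contributions (bounded using $\theta^2 \le \theta$), into $C_1\theta$, where $C_1, C_2$ depend only on $\tau$ and fixed bounds on $p$ and $\Omega$, not on $\theta$ or $N$.

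I expect the main obstacle to be the second paragraph: rigorously justifying the leading form $M_{ij} \approx p_i p_j/N$ from \cref{eq:symbolic-m}. This requires pinning down the correspondence between the covariance $\Omega$ that defines $M$ and the target $p$ generated by the same $\Omega$ through \cref{def:logits}, and carefully tracking the softmax normalizers $Z_i = \sum_k \exp(\Omega_{ik}/\tau)$ so that the neglected off-diagonal mass is provably $\mathcal{O}(\theta + 1/N)$ per row. Once that correspondence and the $\sum_i p_i = 1$ contraction are in hand, matching the first-order terms and bounding the remainder are routine.
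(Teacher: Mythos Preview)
Your expansion of the closed form $\tilde p_i = q'_i + \tfrac{p_i(\theta - q'_i)}{N} + \mathcal{O}(N^{-2})$ is correct, but the expansion of $M^\top q'$ is in the wrong regime. From \cref{eq:symbolic-m} the paper works in the small-$\Omega/\tau$ regime, where $\exp(\Omega_{ij}/\tau)=1+\Omega_{ij}/\tau+O(\|\Omega\|^2)$ and the row normalizer is $N+s_i$ with $s_i=\sum_k\Omega_{ik}/\tau$. This makes the leading order of every entry
\[
M_{ij}=\frac{1}{N}+\frac{\Omega_{ij}/\tau}{N}-\frac{s_i}{N^2}+O(\|\Omega\|^2),
\]
so $M$ is a perturbation of the \emph{uniform} matrix, not of the identity, and $(M^\top q')_j$ has leading term $1/N$, not $q'_j$. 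Your posited forms $M_{ii}\approx 1-\tfrac{p_i(1-p_i)}{N}$, $M_{ij}\approx \tfrac{p_ip_j}{N}$ correspond to a diagonally-dominant large-$\Omega_{ii}/\tau$ regime and do not arise from \cref{eq:symbolic-m} under the expansion the theorem requires; this is exactly the step you flagged as the main obstacle, and it does not go through as you wrote it.

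The paper closes the gap not by deriving a link between $\Omega$ and $p$, but by importing two further assumptions. First, it postulates a rank-one–type alignment $\Omega_{ij}/\tau\approx\theta\,p_j+O(\theta^2)$ (so that $s_i\approx\theta$), which gives
\[
(M^\top q')_j=\frac{1}{N}+\frac{\theta\,p_j}{N}-\frac{\theta}{N^2}+O(\theta^2).
\]
Second, it asserts $Nq'_j=1+O(\theta)$, i.e.\ that $q'$ is itself $O(\theta)$-close to uniform, which forces the closed form to the same leading term:
\[
\tilde p_j=\frac{1}{N}+\frac{\theta\,p_j}{N}-\frac{p_j}{N^2}+O(\theta).
\]
With both sides expanded around $1/N$, subtracting and taking $\|\cdot\|_1$ yields the $\mathcal{O}(\theta+1/N^2)$ bound; the normalization $p_{\mathrm{Taylor}}=\tilde p/\sum_j\tilde p_j$ is handled exactly as you outlined, via $\sum_j\tilde p_j=1+\mathcal{O}(\theta+1/N^2)$ and the triangle inequality. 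So your first and third paragraphs are on the right track, but the middle paragraph needs to be rebuilt around the uniform leading order of $M$ and to invoke (rather than derive) the structural assumptions $\Omega_{ij}/\tau\approx\theta p_j$ and $q'_j\approx 1/N$ that the paper uses to make the two expansions meet.
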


\begin{proof}
    See \cref{appendix:proof-rdk-taylor}.
\end{proof}

\section{Empirical results}\label{sec:empirical-results}

\nadav{Reproduce FP-Spec. In addition to calculating the average frequency, calculate the logits' average and standard deviation per token. Then, based on the average frequency and std, redistribute excessive probability mass as follows.
}

We validate our methods and the theoretical framework through a series of complementary empirical studies.
All our code will be open-sourced upon publication.

\subsection{Underlying distributions}\label{sec:underlying-distributions}

\paragraph{Tokens.}
We replicate and extend the FR-Spec study \citep{zhao2025fr} on \texttt{llama-3.1-8b} by computing token frequencies over the first 100M tokens of the \texttt{wikitext} dataset. This analysis reveals the long-tailed nature of the \texttt{llama-3.1-8b} vocabulary and informs our choice of pruning thresholds in subsequent experiments. The result shows that 12.6\% of the tokens (16.1k from 128k) support 95\% of the probability mass. Such long-tailed distributions highlight the potential of extreme pruning, as 87.4\% of the tokens are rarely used and support only 5\% of the probability mass.

\begin{figure}[htbp]
    \centering
    \includegraphics[width=0.8\textwidth]{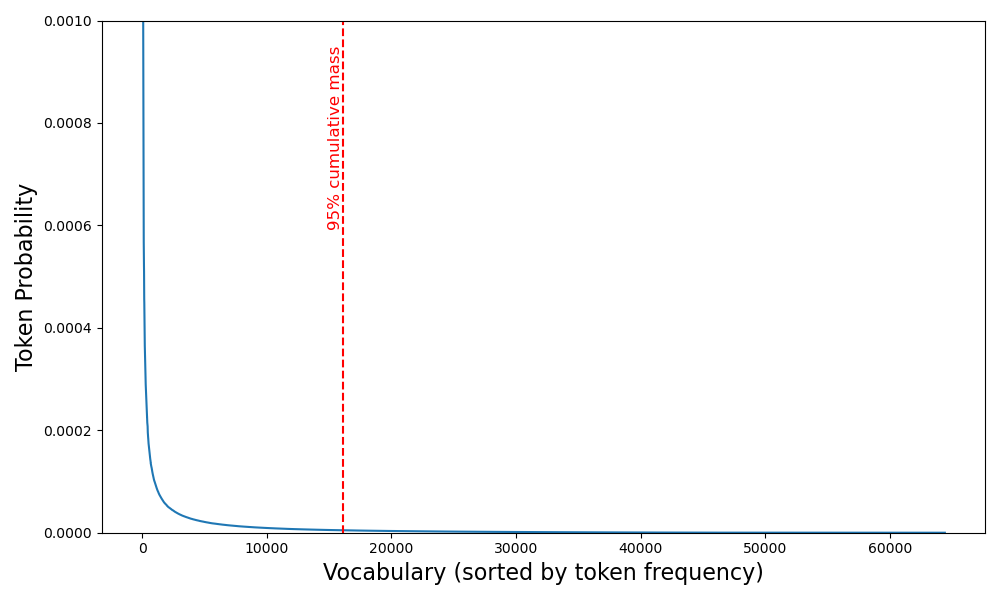}
    \caption{Token frequency analysis. 12.6\% of the tokens (16.1k from 128k) support 95\% of the cumulative probability mass of \texttt{llama-3.1-8b} on 100M tokens from the \texttt{wikitext} dataset. Such long-tailed distributions highlight the potential of extreme pruning.}
    \label{fig:token-frequency}
\end{figure}

\paragraph{Logits.}
We examine the empirical distribution of pre-softmax logits produced by \texttt{gpt-2}. 
\cref{fig:logits-distribution} shows the statistical properties of GPT-2 logits across token positions, analyzed on chunks of tokens that we randomly sampled from the \texttt{wikitext} dataset. These distributions reveal the heavy-tailed nature of logit values, with mean values predominantly negative and standard deviations typically ranging from 2-4. Notably, even the maximum logit values are predominantly negative.

\begin{figure}[htbp]
    \centering
    \includegraphics[width=0.8\textwidth]{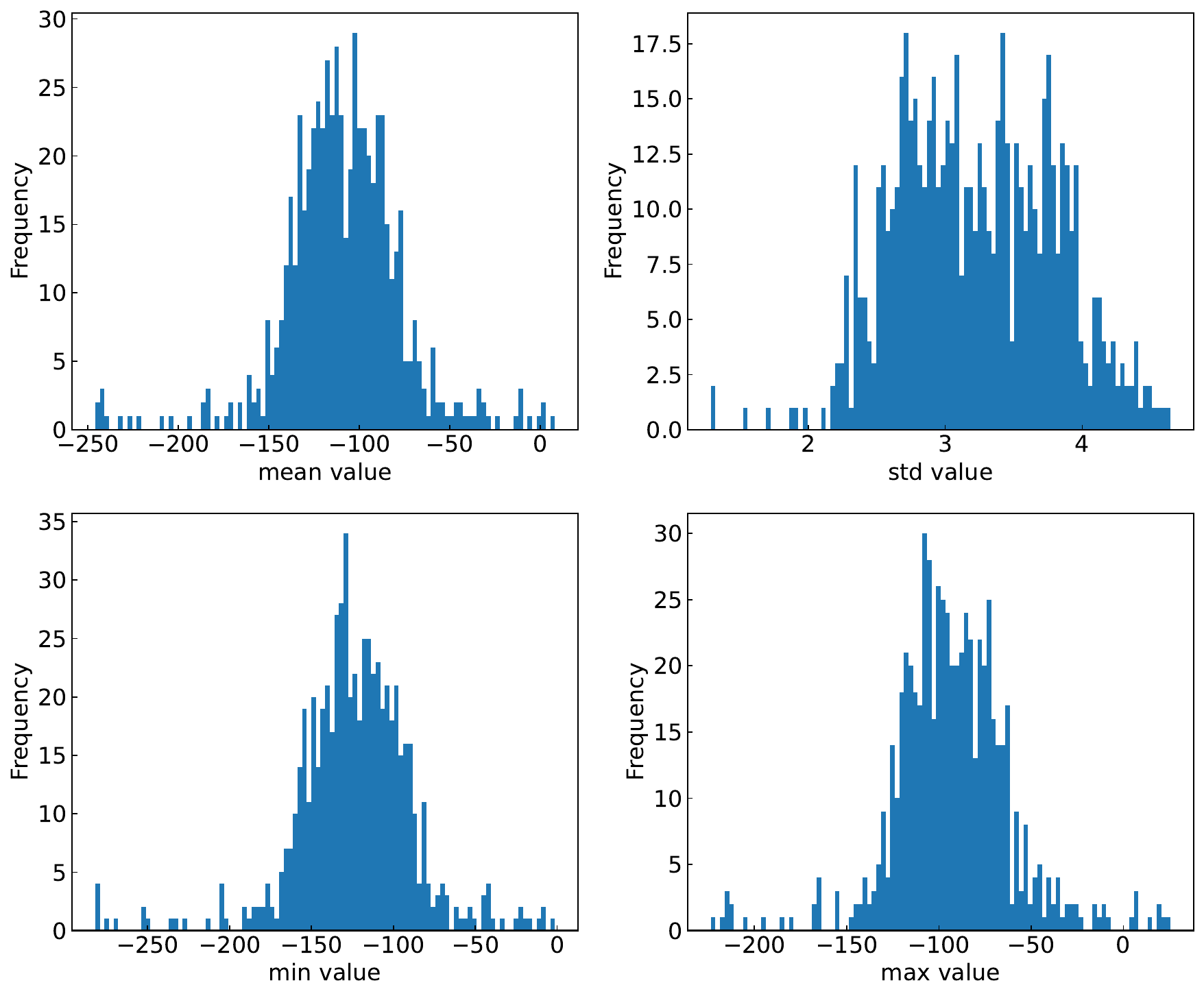}
    \caption{Histograms of \texttt{gpt-2} logit statistics across token positions. (Top-left) Mean logits, (Top-right) logit standard deviations, (Bottom-left) minimum logits, and (Bottom-right) maximum logits. Mean and minimum values concentrate in the strongly negative range, standard deviations lie between approximately 2 and 4, and maximum logits exhibit a heavy tail up to around 0, indicating that although most logits are negative with limited variability, some contexts produce comparatively higher logit scores.}
    \label{fig:logits-distribution}
\end{figure}

To assess the heavy-tailed nature of our empirical logits, we employ a quantile-quantile (Q-Q) plot\footnote{\url{https://en.wikipedia.org/wiki/Q\%E2\%80\%93Q_plot}}.  A Q-Q plot compares the ordered sample values against the theoretical quantiles of a chosen reference distribution.  Points that lie along the 45° identity line indicate close agreement in location and scale, while systematic departures reveal specific mismatches.
This visual check is especially useful when fitting heavy-tailed distributions such as the Student \(t\).
\cref{fig:logits-qq-t-distribution} shows the resulting Q-Q plot of our observed logits versus a fitted Student \(t\) distribution. It suggests that a Student \( t \) distribution closely matches the heavy tails of the observed logits.

\begin{figure}[htbp]
    \centering
    \includegraphics[width=0.8\textwidth]{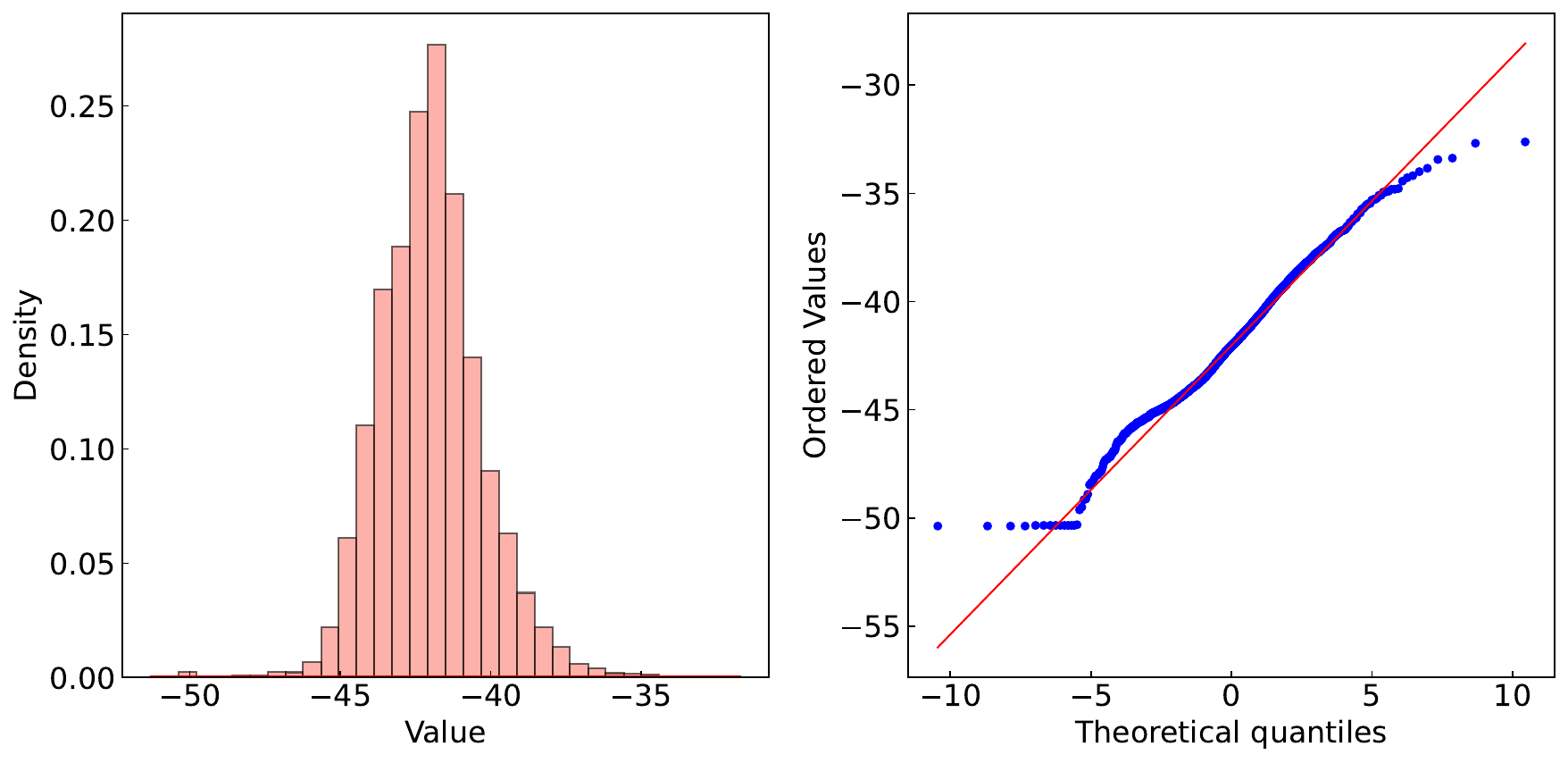}
    \caption{Empirical distribution of logits for a random prompt compared to a fitted Student\,$t$ distribution (df=5). (Left) Histogram of the observed logits overlaid with the maximum-likelihood estimated $t$-distribution PDF. (Right) Q-Q plot comparing empirical quantiles (blue) to theoretical quantiles (red), showing excellent tail alignment. 
The close match in the heavy tails motivates our use of $t$-based priors in synthetic evaluations.}
    \label{fig:logits-qq-t-distribution}
\end{figure}

\cref{sec:logits-qq-normal-distribution} shows the Q-Q plot of the logits distribution with a normal distribution, from which we learn that the logits are more likely to be distributed as the Student \(t\) distribution rather than the normal distribution, as we hypothesized in \cref{sec:theoretical-results}.

\paragraph{Output probabilities.}
We investigate the empirical distribution of post-softmax probabilities produced by \texttt{gpt-2}. \cref{fig:output-probabilities-distribution} presents histograms of per-position mean, standard deviation, minimum, and maximum probabilities, computed over random chunks sampled from \texttt{wikitext}. These plots reveal that output probabilities are highly concentrated: mean and minimum values cluster near zero, standard deviations lie in the 0.001-0.004 range, and maximum values exhibit a heavy tail spanning roughly 0.2 to 1.0. This behavior underscores that only a small subset of tokens captures most of the probability mass in any given context. This finding is consistent with the token frequency analysis in \cref{fig:token-frequency}, where only a small subset of tokens support 95\% of the probability mass.

\begin{figure}[htbp]
    \centering
    \includegraphics[width=0.8\textwidth]{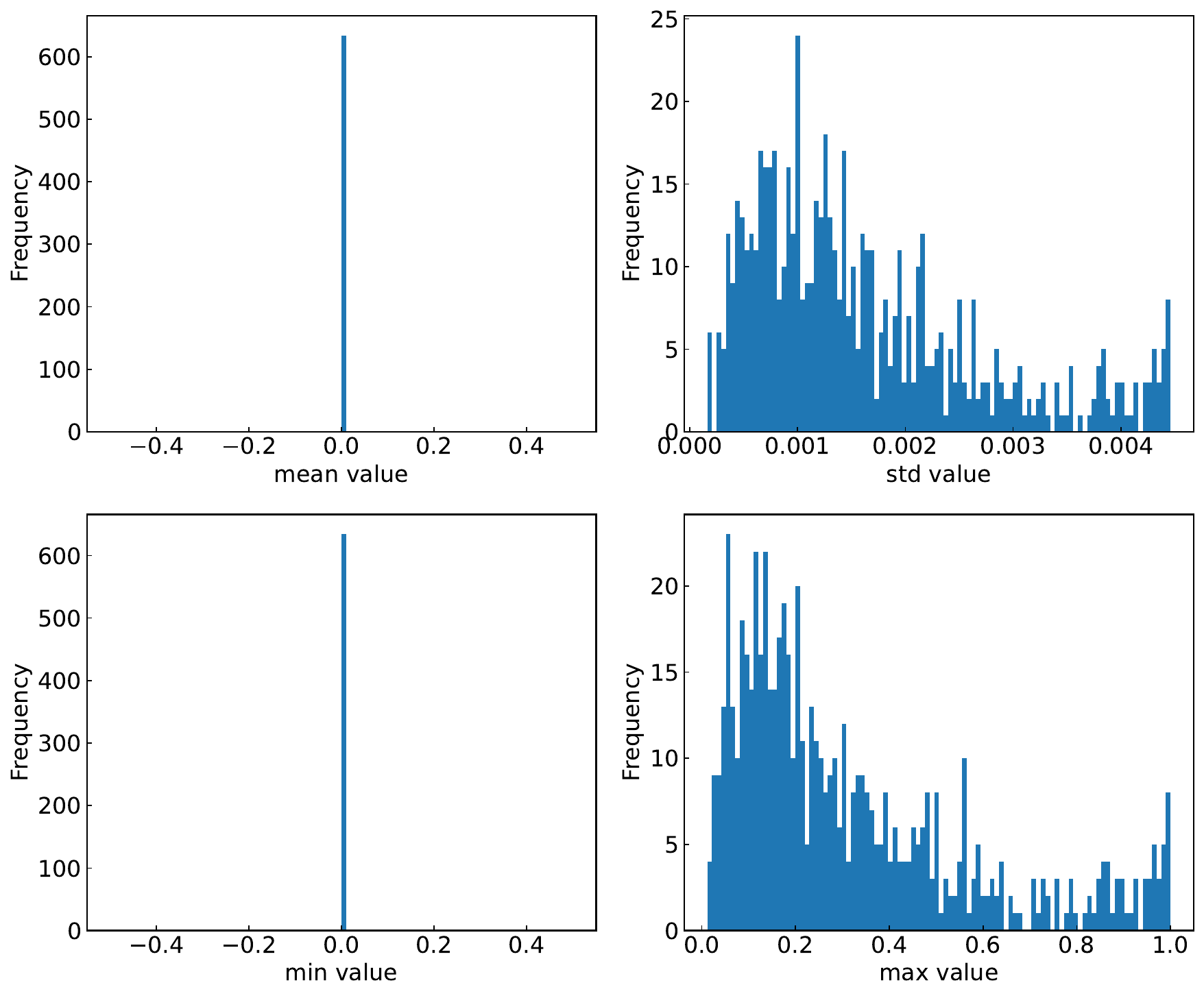}
    \caption{Histograms of \texttt{gpt-2} output-probability statistics across token positions. (Top-left) Mean probabilities, (Top-right) standard deviations, (Bottom-left) minimum probabilities, and (Bottom-right) maximum probabilities. Mean and minimum values concentrate at zero, standard deviations lie in the \(10^{-3}\) range, and maximum probabilities show a heavy tail up to 1.0, indicating sparse, high-confidence peaks amid largely negligible mass assignments.}
    \label{fig:output-probabilities-distribution}
\end{figure}

In \cref{sec:output-probabilities-qq-plots}, we also includes Q-Q plot diagnostics of the post-softmax probabilities against two candidate fits. \cref{fig:probs-qq-normal} shows a Q-Q plot versus a Normal distribution, and \cref{fig:probs-qq-t} shows a Q-Q plot versus a Student \(t\) distribution. Neither fit aligns well with the empirical tails, which motivated our choice of heavy-tailed logits in the theoretical framework (see \cref{sec:theoretical-results}).

\subsection{Acceptance recovery}



Based on our findings in \cref{sec:underlying-distributions}, we sample synthetic target distributions from a \(t\) distribution (\cref{fig:target-distribution}) and measure acceptance rates under three schemes: (a) pruning without redistribution, (b) Token-Level Intersection (TLI), and (c) our first-order Taylor approximations of RDK. \cref{fig:acceptance-rates-overlay} visualizes the exact probability-mass of each scheme after pruning to 0.00025\% of the vocabulary (500 tokens from an original 200k). Across a range of pruning levels, RDK maintains an almost constant acceptance rate of 26.7\%, and therefore substantially outperforms both baselines for high pruning rates, as shown in \cref{fig:acceptance-rates-synthetic}. TLI fails to recover acceptance rates because it assigns zero probability to out-of-vocabulary tokens. 
\cref{sec:acceptance-rates-of-off-the-shelf-models} shows the acceptance rates of off-the-shelf models.

\begin{figure}[htbp]
  \centering
  \includegraphics[width=0.8\textwidth]{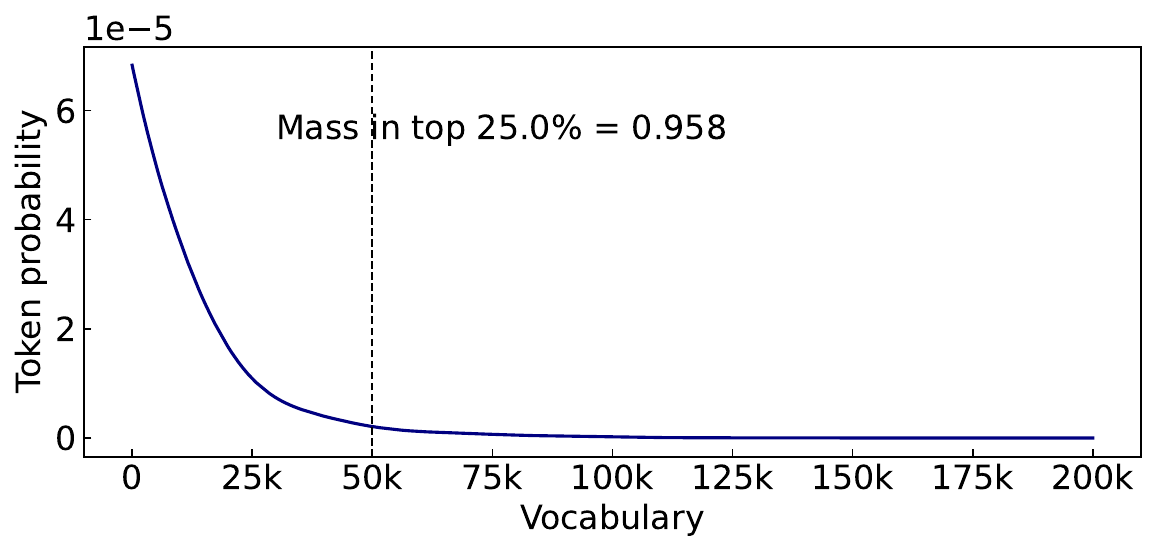}
  \caption{Random target distribution (test sample). The dashed vertical line marks the 25\% quantile, which contains 95.8\% of the total mass. Such concentrated target distributions follow the observed token frequency distribution from \cref{fig:token-frequency} and \citet{zhao2025fr}.}
  \label{fig:target-distribution}
\end{figure}

\begin{figure}[htbp]
  \centering
  \includegraphics[width=0.8\textwidth]{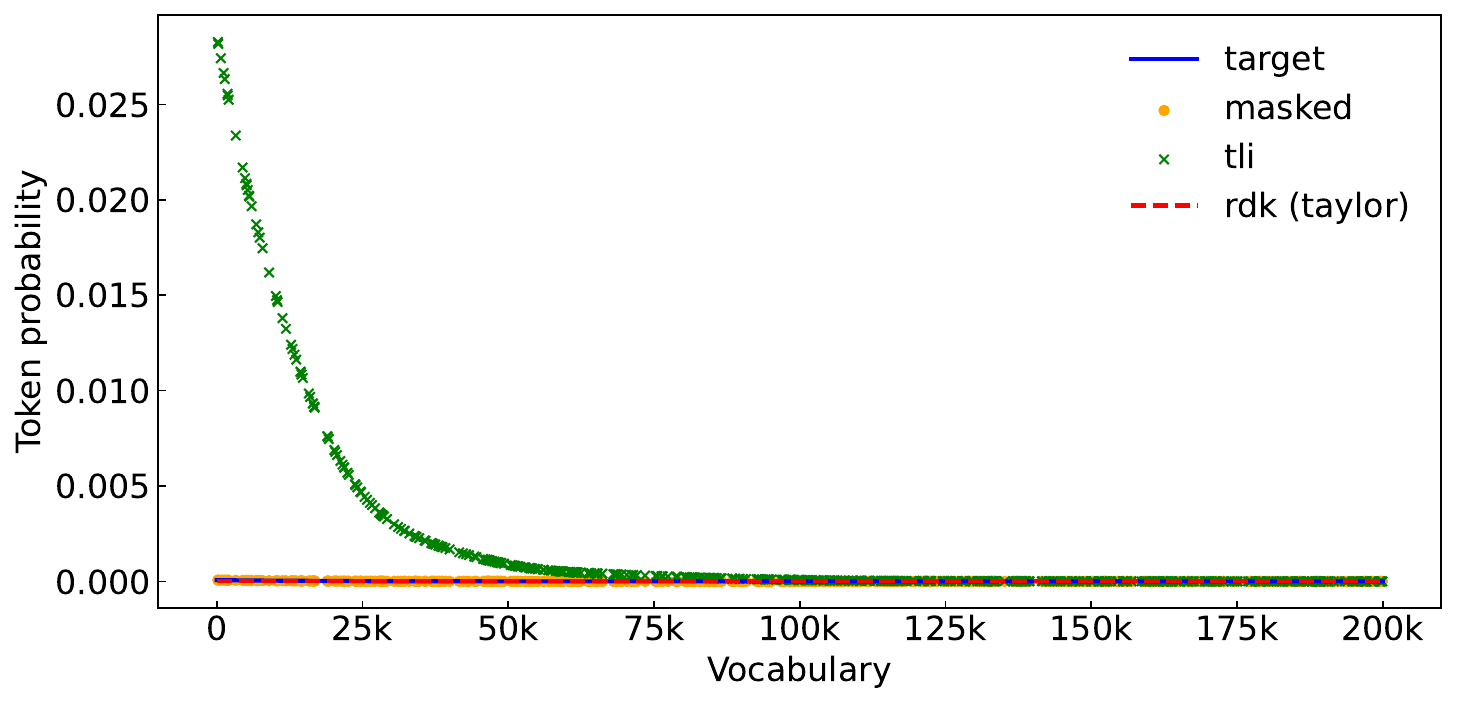}
  \caption{Overlay of probability-mass distributions after pruning to 0.00025\% of the vocabulary (500 tokens from originally 200k). The solid blue line is the true target, orange dots the masked-only baseline, green ×'s TLI, and red dashed the RDK Taylor approximation.}
  \label{fig:acceptance-rates-overlay}
\end{figure}


\begin{figure}[htbp]
    \centering
    \includegraphics[width=0.8\textwidth]{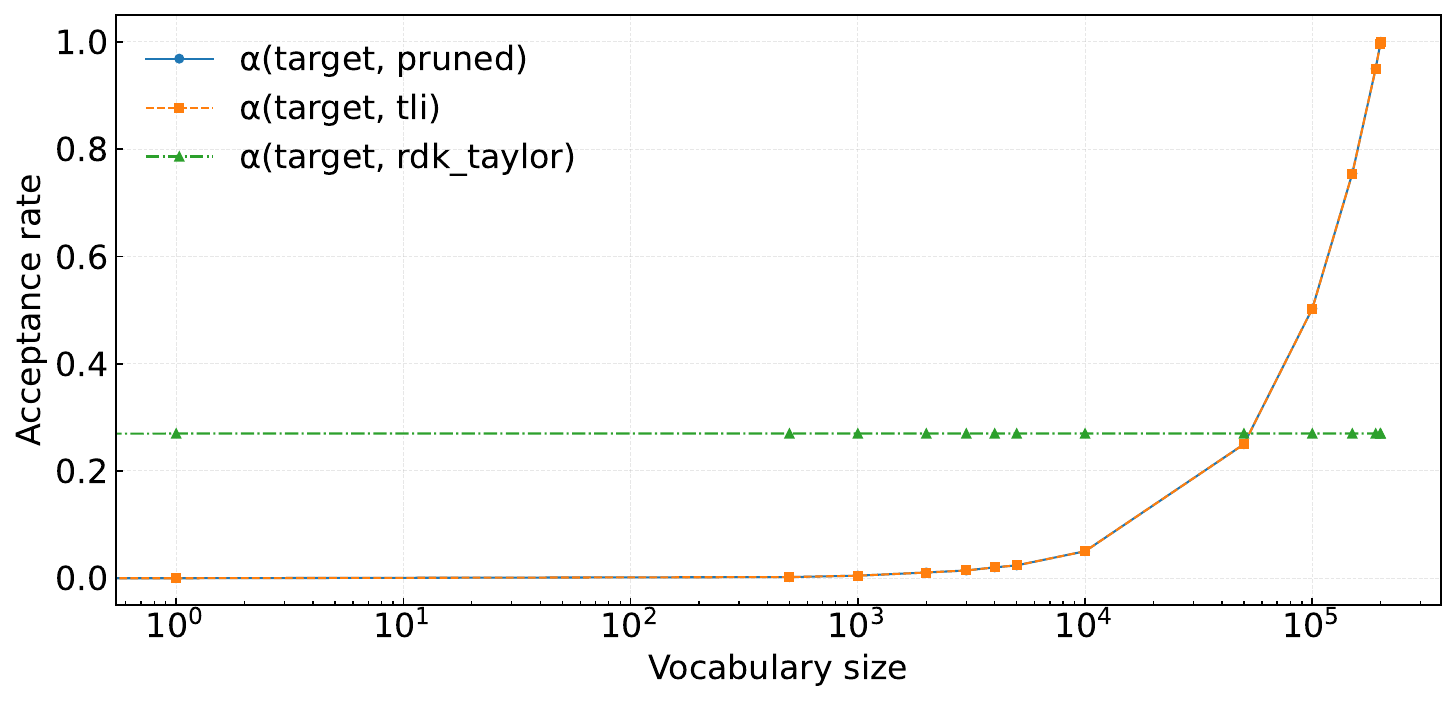}
    \caption{Acceptance rates with synthetic target distributions. TLI fails to recover acceptance rates because it assigns zero probability to out-of-vocabulary tokens. RDK maintains an almost constant acceptance rate of 26.7\%, and therefore substantially outperforms both baselines for high pruning rates.}
    \label{fig:acceptance-rates-synthetic}
\end{figure}

\section{Discussion}\label{sec:discussion}
We introduce the novel concept of out-of-vocabulary (OOV) sampling and present RDK, the first sampler in this category.  
We establish the theoretical foundations for OOV-aware speculative decoding.  
We prove that RDK improves acceptance over both TLI and vanilla samplers.  
We derive a linear-time Taylor approximation.  
This approximation allows efficient scaling to large vocabularies.  

\paragraph{Limitations.}
Estimating the token-affinity matrix $M$ requires one preprocessing pass over sufficiently representative data. This is a one-time cost that can be amortized across all subsequent decoding runs.



\bibliographystyle{plainnat}
\bibliography{bib}

\appendix

\tableofcontents

\section{Existing speculative decoding algorithms}

\begin{algorithm2e}[H]
    \caption{Token-Level Intersection (TLI), rephrased from \citet{timor2025accelerating}}
    \label{alg:tli}
    \KwIn{Probability distributions $p$ and $q$ over vocabularies $T$ and $D$, respectively.}
    \nl Define a probability distribution $q'$ over the vocabulary $T \cap D$ such that $q'(x)=\frac{q(x)}{\sum_{x \in T} q(x)}$ if $x \in T$ and $q'(x) = 0$ otherwise\;
    \nl \textbf{Run} vanilla speculative decoding (\cref{alg:sd}) with $p, q'$\;
\end{algorithm2e}

\begin{algorithm2e}[H]
    \caption{Frequency-Ranked (FR) Pruning, rephrased from \citet{zhao2025fr}}
    \label{alg:fr-prune}
    \KwIn{A drafter of vocabulary $D$ with LM head $W \in \mathbb{R}^{|D| \times h}$, its tokenizer, a training dataset $S$, and the size of the pruned vocabulary $m \le |D|$.}
    \nl Compute the empirical frequency $f(d)$ of each token $d \in D$ over the dataset $S$\;
    \nl Sort the tokens in $D$ by their empirical frequency in descending order\;
    \nl Replace the head matrix of the drafter $W$ with a new head matrix $W' \in \mathbb{R}^{m \times h}$ that corresponds to the $m$ most frequent tokens in $D$ by their empirical frequencies. Construct $W'$ by copying the $m$ rows from $W$ that correspond to the $m$ most frequent tokens, and discarding the rest of the rows\;
    \nl \textbf{Return} the pruned drafter $W'$\;
\end{algorithm2e}

\begin{algorithm2e}[H]
    \caption{Vanilla speculative decoding, rephrased from \citet{leviathan2023fast,chen2023accelerating}}
    \label{alg:sd}
    \KwIn{Probability distributions $p$ and $q$ over a vocabulary $T$. Drafting lookahead $i \in \mathbb{N}$. An input prompt~$c$.}
    \KwOut{A sequence of tokens from $T$, containing between $1$ to $i+1$ tokens.}
    \nl \For{$j \leftarrow 1, \ldots, i$}{
        \nl Sample a draft token from the drafter conditioned on the prompt and previous drafts, $d_j \sim q_{c \oplus d_1 \oplus \ldots \oplus d_{j-1}}$ (where $d_0 := c$)\;
    }
    \nl \label{line:batching} With data parallelism (batching), compute via one target forward pass the $i+1$ logits of the target model conditioned on the prompt and all the draft continuations, $p_{c},~~p_{c \oplus d_1},~~\cdots,~~p_{c \oplus d_1 \oplus \cdots \oplus d_i}$\;
    \nl \For{$j \leftarrow 1, \ldots, i$}{
        \nl Let $x \leftarrow c \oplus d_1 \oplus \cdots \oplus d_{j-1}$ (where $d_0 := c$)\;
        \nl \label{line:reject} \If{$p_{x}(d_j) \le q_{x}(d_j)$}{
            With probability $1 - \frac{p_x(d_j)}{q_x(d_j)}$, \textbf{reject}~$d_j$ and \textbf{goto} line \ref{line:count-accepted-drafts} (i.e., break this for-loop)\;
        }
        \nl \textbf{Accept} the draft token $d_j$\;
    }
    \nl \label{line:count-accepted-drafts} Let $j \in \{0, 1, \ldots, i\}$ be the number of accepted drafts. Set $x \leftarrow c \oplus d_1 \oplus \ldots \oplus d_j$\;
    \nl \eIf{line \ref{line:reject} ever rejected a token}{
        Sample $t \sim r_{x}$ for $r_x(t) := \frac{p_x(t) - \min\{p_x(t), q_x(t)\}}{1 - \sum_{t' \in T}\min\{p_x(t'), q_x(t')\}}$\;
    }{
        Sample $t \sim p_x$\;
    }
    \nl \Return{$d_1, \ldots, d_j, t$}\;
\end{algorithm2e}

\section{Proofs}\label{appendix:proofs}

\subsection{Proof of \cref{lemma:minimizing-kernels-or-l1-increases-acceptance-rate}}\label{appendix:proof-minimizing-kernels}

\MinimizingKernelsOrLOneIncreasesAcceptanceRate*

\begin{proof}
    For any \(x \in \mathcal{V}\), we can write
    \[
    |p(x) - q(x)| = \max\{p(x),q(x)\} - \min\{p(x),q(x)\}.
    \]
    Summing over \(x \in \mathcal{V}\) and using the fact that \(p\) and \(q\) are probability distributions (i.e., \(\sum_x p(x) = \sum_x q(x) = 1\)), we obtain
    \[
    2 = \sum_{x \in \mathcal{V}} \max\{p(x),q(x)\} + \sum_{x \in \mathcal{V}} \min\{p(x),q(x)\}.
    \]
    By \cref{def:acceptance-rate}, the acceptance rate is defined as \(\alpha := \sum_{x \in \mathcal{V}} \min\{p(x),q(x)\}\).
    Then,
    \[
    \sum_{x \in \mathcal{V}} \max\{p(x),q(x)\} = 2 - \alpha.
    \]
    So the \(L_1\) distance is
    \[
    \|p - q\|_1 = \sum_{x \in \mathcal{V}} |p(x) - q(x)| = (2 - \alpha) - \alpha = 2 - 2\alpha.
    \]
    Rearranging gives
    \[
    \alpha = 1 - \frac{1}{2}\|p - q\|_1.
    \]
    Now, recall that the \emph{drafter kernel} is defined as $k(x) := \max\{0,\, q(x)-p(x)\}$.
    For each \(x \in \mathcal{V}\), there are two cases. If \(q(x) \ge p(x)\), then \(k(x)=q(x)-p(x)\) and $|p(x)-q(x)| = q(x)-p(x) = k(x)$.
    Otherwise, we have that \(q(x) < p(x)\), and then \(k(x)=0\) and $|p(x)-q(x)| = p(x)-q(x)$.
    Since \(p\) and \(q\) are both probability distributions, we have 
    \[
    \sum_{x \in \mathcal{V}} \left(q(x) - p(x)\right) = 0.
    \]
    Let 
    \[
    A := \{ x \in \mathcal{V} \mid q(x) > p(x) \} \quad \text{and} \quad B := \{ x \in \mathcal{V} \mid q(x) \le p(x) \}.
    \]
    Then, it follows that
    \[
    \sum_{x \in A} \bigl(q(x)-p(x)\bigr) = -\sum_{x \in B} \bigl(q(x)-p(x)\bigr) = \sum_{x \in B} \bigl(p(x)-q(x)\bigr).
    \]
    Thus, the total \(L_1\) difference is given by
    \[
    \|p-q\|_1 = \sum_{x \in A} \bigl(q(x)-p(x)\bigr) + \sum_{x \in B} \bigl(p(x)-q(x)\bigr) 
    = 2 \sum_{x \in A} \bigl(q(x)-p(x)\bigr).
    \]
    By definition, the \(L_1\) norm of the drafter kernel is
    \[
    \|k\|_1 = \sum_{x \in A} \bigl(q(x)-p(x)\bigr).
    \]
    Hence,
    \[
    \|p-q\|_1 = 2\,\|k\|_1.
    \]
    Substituting this back into the expression 
    \[
    \alpha = 1 - \frac{1}{2}\|p-q\|_1,
    \]
    we obtain
    \[
    \alpha = 1 - \|k\|_1.
    \]
    Thus, minimizing the \(L_1\) norm of the drafter kernel \(\|k\|_1\) increases the acceptance rate \(\alpha\), since \(\alpha\) is inversely proportional to \(\|k\|_1\).
\end{proof}

\subsection{Proof of \cref{thm:rdk-taylor}}\label{appendix:proof-rdk-taylor}

\TaylorRDKApprox*

\begin{proof}
    Write \(q' \coloneqq (q'_i)_{i\in T}\) and \(p_{\mathrm{RDK}} \coloneqq M^\top q'\). From \cref{eq:symbolic-m},
    \[
      M_{ij}
      := \frac{\exp(\Omega_{ij}/\tau)}{\sum_{k\in T}\exp(\Omega_{ik}/\tau)}\,,
    \]
    where \(\Omega\) is the covariance matrix (renamed to avoid conflict). For small \(\Omega_{ij}/\tau\), expand numerator and denominator:
    \[
      \exp(\Omega_{ij}/\tau)
      = 1 + \frac{\Omega_{ij}}{\tau} + O(\|\Omega\|^2),
      \qquad
      \sum_{k\in T}\exp(\Omega_{ik}/\tau)
      = N + s_i + O(\|\Omega\|^2),
    \]
    with 
    \[
      s_i \coloneqq \sum_{k\in T}\frac{\Omega_{ik}}{\tau}.
    \]
    Hence, set 
    \[
      A \coloneqq \frac{\Omega_{ij}}{\tau},
      \qquad
      s_i \coloneqq \sum_{k\in T}\frac{\Omega_{ik}}{\tau},
    \]
    and write \(x \coloneqq s_i/N\).  Then
    \[
      M_{ij}
      = \frac{1 + A}{N + s_i}
      = \frac{1 + A}{N(1 + x)}
      = \frac{1 + A}{N}\,\frac{1}{1 + x}
      = \frac{1 + A}{N}\Bigl(1 - x + O(x^2)\Bigr)
      = \frac{1 + A}{N} - \frac{(1 + A)\,x}{N} + O(\|\Omega\|^2).
    \]
    Since \((1 + A)x = \tfrac{s_i}{N} + O(\|\Omega\|^2)\), we obtain
    \[
      M_{ij}
      = \frac{1 + A}{N} - \frac{s_i}{N^2} + O(\|\Omega\|^2)
      = \frac{1}{N} + \frac{A}{N} - \frac{s_i}{N^2} + O(\|\Omega\|^2).
    \]
    Define
    \[
      \theta_j \coloneqq \sum_{i\in T} q'_i\,\frac{\Omega_{ij}}{\tau},
      \qquad
      \Theta \coloneqq \sum_{i\in T} q'_i\,s_i.
    \]
    Then
    \[
      (p_{\mathrm{RDK}})_j
      = \sum_{i\in T} q'_i\,M_{ij}
      = \frac{1}{N} + \frac{\theta_j}{N} - \frac{\Theta}{N^2} + O(\|\Omega\|^2).
    \]
    Empirically, the covariance matrix is often dominated by a single principal component that aligns closely with the target distribution \(p\). We assume that for each \(i,j\in T\) there is a (small) scalar
    \begin{equation}\label{eq:theta-definition}
      \theta \coloneqq p^\top q'
    \end{equation}
    such that
    \begin{align}\label{eq:theta-approx}
      \frac{\Omega_{ij}}{\tau}
      \approx \theta\,p_j + O(\theta^2),
      \qquad
      s_i \approx \theta + O(\theta^2).
    \end{align}
    Under this assumption the key quantities become
    \[
      \theta_j
      = \sum_{i\in T} q'_i\,\frac{\Omega_{ij}}{\tau}
      \approx \sum_{i\in T} q'_i\bigl(\theta\,p_j + O(\theta^2)\bigr)
      = \theta\,p_j + O(\theta^2),
    \]
    \[
      \Theta
      = \sum_{i\in T}q'_i\,s_i
      \approx \sum_{i\in T}q'_i\bigl(\theta + O(\theta^2)\bigr)
      = \theta + O(\theta^2).
    \]
    Substituting into our earlier expansion gives
    \[
      (p_{\mathrm{RDK}})_j
      = \frac{1}{N}
      + \frac{\theta\,p_j}{N}
      - \frac{\theta}{N^2}
      + O(\theta^2).
    \]
    On the other hand, by definition
    \[
    \tilde p_j \;=\;\frac{N\,q'_j + \theta\,p_j}{\,N + p_j\,}
    = \underbrace{\frac{N\,q'_j}{N + p_j}}_{(1)} \;+\;\underbrace{\frac{\theta\,p_j}{N + p_j}}_{(2)}.
    \]
    We now expand each piece to first order in \(\theta\).

    First, since \(\sum_{i}q'_i=1\) and \(\theta = p^\top q'\), one shows
    \[
    N\,q'_j = 1 + O(\theta).
    \]
    Hence
    \[
    \frac{N\,q'_j}{N + p_j}
    = \frac{1 + O(\theta)}{N + p_j}
    = \frac{1}{N + p_j} + O(\theta).
    \]
    Next, expand \(1/(N+p_j)\) in powers of \(p_j/N\):
    \[
    \frac{1}{N + p_j}
    = \frac{1}{N}\,\frac{1}{1 + \tfrac{p_j}{N}}
    = \frac{1}{N}\Bigl(1 - \tfrac{p_j}{N} + O(N^{-2})\Bigr)
    = \frac{1}{N} - \frac{p_j}{N^2} + O(N^{-3}).
    \]
    Putting these together,
    \[
    (1)\;=\;\frac{N\,q'_j}{N + p_j}
    = \frac{1}{N} - \frac{p_j}{N^2} + O(\theta),
    \]
    and
    \[
    (2)\;=\;\frac{\theta\,p_j}{N + p_j}
    = \theta\,\frac{p_j}{N}\bigl(1 + O(N^{-1})\bigr)
    = \frac{\theta\,p_j}{N} + O(\theta\,N^{-1}).
    \]
    hence
    \[
      \tilde p_j
      = \frac{1}{N}
      + \frac{\theta\,p_j}{N}
      - \frac{p_j}{N^2}
      + O(\theta).
    \]
    Comparing to \((p_{\mathrm{RDK}})_j\), we get
    \[
      \tilde p_j - (p_{\mathrm{RDK}})_j
      = -\frac{p_j}{N^2} + \frac{\theta}{N^2} + O(\theta)
      = \mathcal O\!\Bigl(\theta + \tfrac1{N^2}\Bigr).
    \]
    Hence
    \[
      \|\tilde p - p_{\mathrm{RDK}}\|_1
      = \mathcal O\!\Bigl(\theta + \tfrac1{N^2}\Bigr).
    \]
    Finally, set
    \[
    S \;=\;\sum_{j\in T}\tilde p_j,
    \]
    and write
    \[
    \tilde p = S\,p_{\mathrm{Taylor}}, 
    \qquad
    p_{\mathrm{Taylor}} = \frac{\tilde p}{S}.
    \]
    We have already shown
    \[
    \|\tilde p - p_{\mathrm{RDK}}\|_1 = \mathcal O\!\Bigl(\theta + \tfrac1{N^2}\Bigr).
    \]
    On the other hand, from the expansion
    $\tilde p_j
    = \frac1N + \frac{\theta\,p_j}{N} - \frac{p_j}{N^2} + O(\theta)$,
    summing over \(j\) gives
    \[
    S = \sum_j\tilde p_j
    = 1 + \underbrace{\Bigl(\sum_j O(\theta)\Bigr)}_{=O(\theta)}
    + \underbrace{\Bigl(-\sum_j\frac{p_j}{N^2}\Bigr)}_{=O(1/N^2)}
    = 1 + \delta,
    \quad
    \delta = \mathcal O\!\Bigl(\theta + \tfrac1{N^2}\Bigr).
    \]
    Hence
    \[
    p_{\mathrm{Taylor}}
    = \frac{\tilde p}{1+\delta}
    = \tilde p\,(1 - \delta + O(\delta^2))
    = \tilde p + O(\delta)
    = \tilde p + \mathcal O\!\Bigl(\theta + \tfrac1{N^2}\Bigr).
    \]
    Applying the triangle inequality over
    $p_{\mathrm{Taylor}} - p_{\mathrm{RDK}}
    = \bigl(p_{\mathrm{Taylor}} - \tilde p\bigr)
    + \bigl(\tilde p - p_{\mathrm{RDK}}\bigr)$
    gives
    \[
      \bigl\|p_{\mathrm{Taylor}} - p_{\mathrm{RDK}}\bigr\|_1
      \;\le\;
      \underbrace{\bigl\|p_{\mathrm{Taylor}} - \tilde p\bigr\|_1}
        _{\,\mathcal O(\theta + 1/N^2)\,}
      \;+\;
      \underbrace{\bigl\|\tilde p - p_{\mathrm{RDK}}\bigr\|_1}
        _{\,\mathcal O(\theta + 1/N^2)\,}
      = \mathcal O\!\Bigl(\theta + \tfrac1{N^2}\Bigr).
    \]
    This completes the proof.
\end{proof}

\subsection{Proof of \cref{thm:softmaxstability}}\label{appendix:proof-softmaxstability}

\SoftmaxStability*

\begin{proof}
Define a linear interpolation between the two logits to be $z_s := z + s \eta$ where $s \in [0, 1]$. Let $p_s := \mathrm{softmax}(z_s)$. By the fundamental theorem of calculus, we have that
\[
q - p = p_1 - p_0 = \int_0^1 \frac{d}{ds} p_s \, ds.
\]

The derivative of the Softmax, by the chain rule, is given by
\[
\frac{d}{ds} p_s = \nabla \mathrm{softmax}(z_s) \cdot \eta = J_s \eta,
\]
where \( J_s := \nabla \mathrm{softmax}(z_s) \) is the Jacobian matrix. The entries of \( J_s \) are given by
\[
[J_s]_{i,j} = p_{s_i} (\delta_{i,j} - p_{s_j}),
\]
where \( \delta_{i,j} \) is the Kronecker delta, namely, \( \delta_{i,j} = 1 \) if \( i = j \) and \( \delta_{i,j} = 0 \) otherwise.
By taking the \( L_1 \) norm and applying the triangle inequality, we get
\[
\|q - p\|_1 = \left\| \int_0^1 J_s \eta \, ds \right\|_1 \le \int_0^1 \|J_s \eta\|_1 \, ds \le \int_0^1 \|J_s\|_{1,1} \|\eta\|_1 \, ds.
\]

The induced operator norm of \( J_s \) in \( L_1 \) is the maximum column sum. Hence, we have that
\[
\|J_s\|_{1,1} = \max_j \sum_i |[J_s]_{ij}| = \max_j 2 p_{s_j}(1 - p_{s_j}) \le \frac{1}{2},
\]
since the function \( x(1 - x) \) achieves its maximum at \( x = 1/2 \). By substituting back, we get
\[
\|q - p\|_1 \le \int_0^1 \frac{1}{2} \|\eta\|_1 \, ds = \frac{1}{2} \|\eta\|_1 \le \frac{1}{2} \epsilon.
\]
\end{proof}

\subsection{Proof of \cref{thm:rdk-acceptance}}\label{appendix:proof-rdk-acceptance}

\RDKAcceptance*

\begin{proof}
    We have
    \begin{equation}
        p'=Mq'=
        \begin{bmatrix}
            M_{11}\\
            M_{12}
        \end{bmatrix}.
    \end{equation}
    By \cref{def:acceptance-rate},
    \begin{equation}
        \alpha_\text{TLI}=\min\{p_1,1\}+\min\{p_2,0\}=p_1,
    \end{equation}
    and
    \begin{equation}
        \alpha_\text{RDK}=\min\{p_1,p_1'\}+\min\{p_2,p_2'\}=\min\{p_1,M_{11}\}+\min\{p_2,M_{12}\}.
    \end{equation}

    For a specific instance $(p_1,p_2)$ sampled from random process \cref{def:logits} and softmax, we discuss three cases:
    \textbf{Case 1): $p_1<M_{11}$.}
    \begin{equation}
        \alpha_\text{RDK} = p_1+\min\{p_2,M_{12}\} \geq p_1=\alpha_\text{TLI},
    \end{equation}
    where $\geq$ holds true because of \cref{assumption:m-is-row-stochastic}.

    \textbf{Case 2): $p_1\ge M_{11}$ and $p_2\ge M_{12}$.}
    \begin{equation}
        \alpha_\text{RDK}=M_{11}+M_{12}=1\geq \alpha_\text{TLI}.
    \end{equation}

    \textbf{Case 3): $p_1\ge M_{11}$ and $p_2< M_{12}$.}
    \begin{equation}
        \alpha_\text{RDK}=M_{11}+p_2.
    \end{equation}
    Since $z \sim \mathcal{N}(\mu,\,M)$, we know that with probability 99.46\%, the following two events hold true simultaneously:
    \begin{equation}
    \begin{split}
        &z_1\in [\mu_1-3\sqrt{M_{11}},\mu_1+3\sqrt{M_{11}}],\\
        &z_2\in [\mu_2-3\sqrt{M_{22}},\mu_2+3\sqrt{M_{22}}].
    \end{split}
    \end{equation}
    That is, we have with high probability,
    \begin{equation}
        z_2-z_1\ge \mu_2-3\sqrt{M_{22}}-(\mu_1+3\sqrt{M_{11}}).
    \end{equation}
    Therefore, with high probability,
    \begin{equation}
    \begin{split}
        p_1-p_2&=\frac{\exp(z_1)}{\exp(z_1)+\exp(z_2)}-\frac{\exp(z_2)}{\exp(z_1)+\exp(z_2)}\\
        &=\frac{\exp(z_1)-\exp(z_2)}{\exp(z_1)+\exp(z_2)}\\
        &=\frac{1-\exp(z_2-z_1)}{1+\exp(z_2-z_1)} \\
        &\le \frac{1-\exp(\mu_2-3\sqrt{M_{22}}-(\mu_1+3\sqrt{M_{11}}))}{1+\exp(\mu_2-3\sqrt{M_{22}}-(\mu_1+3\sqrt{M_{11}}))} \quad\left(\text{because $f(\Delta)=\frac{1-\exp(\Delta)}{1+\exp(\Delta)}$ is decreasing}\right)\\
        &=\frac{\exp(\mu_1+3\sqrt{M_{11}})}{\exp(\mu_1+3\sqrt{M_{11}})+\exp(\mu_2-3\sqrt{M_{22}})}-\frac{\exp(\mu_2-3\sqrt{M_{22}})}{\exp(\mu_2+3\sqrt{M_{11}})+\exp(\mu_2-3\sqrt{M_{22}})}\\
        &\le0\quad (\text{because $\mu_1+3\sqrt{M_{11}}\le\mu_2-3\sqrt{M_{22}}$})\\
        &\le M_{11}\quad \text{(because $M_{11}$ is a variance)}.
    \end{split}
    \end{equation}
    This leads to $\alpha_\text{TLI}\leq \alpha_{\text{RDK}}$.
\end{proof}

\subsection{Proof of \cref{theorem:bounded-l1}}\label{appendix:proof-bounded-l1}

\boundedlone*
\begin{proof}
    Assume that $p$ and $q$ are arbitrary probability vectors over an $m$-element event space.
    Let their supports be $T := \{i : p(i) > 0\}$ and $D := \{i : q(i) > 0\}$, respectively.
    Let the intersection and forbidden sets be defined as $\mathcal{I} := T \cap D$ and $\mathcal{F} := D \setminus \mathcal{I}$, respectively.
    Let $q_0$ be the TLI transport of $q$ into $\mathcal{I}$, as introduced in \citet{timor2025accelerating}:
    \[
        q_0(i) =
        \begin{cases}
        \displaystyle \frac{q(i)}{\sum_{j\in \mathcal{I}} q(j)} & \text{if } i\in\mathcal{I},\\
        0 & \text{if } i\notin\mathcal{I}.
        \end{cases}
    \]
    By definition, $M \in \mathbb{R}^{m \times m}$ is a row-stochastic matrix (i.e., the entries of any row sum to $1$), and we have that $q'$ is only supported on $\mathcal{I}$ (i.e., $q'(i) = 0$ for all $i \notin \mathcal{I}$), and $p' := M^\top q'$ is only supported on $T$ (i.e., $p'(i) = 0$ for all $i \notin T$).
    Let $\epsilon$ be a constant such that $\|M^\top p - p\|_1 \le \epsilon$. 
    We will show that $\|p' - p\|_1 \;\le\; \|q_0 - p\|_1 + \epsilon$.
    
    By definition, we have that $p' = M^\top q'$, hence,
    \[
    p' - p = M^\top q' - p
    \]
    By adding and subtracting $M^\top q_0$, we get
    \[
    p' - p = (M^\top q' - M^\top q_0) + (M^\top q_0 - p)
    \]
    By the triangle inequality, we have that
    \[
    \|p' - p\|_1 \le \|M^\top q' - M^\top q_0\|_1 + \|M^\top q_0 - p\|_1
    \]
    Since $M^\top$ is non-expansive (\cref{lemma:non-expansive}), we have that $\|M^\top q' - M^\top q_0\|_1 \le \|q' - q_0\|_1$. Hence, we get
    \[
    \|p' - p\|_1 \le \|q' - q_0\|_1 + \|M^\top q_0 - p\|_1
    \]
    My adding and subtracting $M^\top p$, and then applying the triangle inequality, we get $\|M^\top q_0 - p\|_1 \le \|M^\top q_0 - M^\top p\|_1 + \|M^\top p - p\|_1$. By applying the non-expansiveness of $M^\top$ and the $\epsilon$ assumption, we get $\|M^\top q_0 - p\|_1 \le \|q_0 - p\|_1 + \epsilon$. Therefore, we have that
    \[
    \|p' - p\|_1 \le \|q' - q_0\|_1 + \|q_0 - p\|_1 + \epsilon
    \]
    If we set $q' = q_0$, then we have that $\|q' - q_0\|_1 = 0$, and the result follows.
\end{proof}

\subsection{Proof of \cref{lemma:non-expansive}}

\begin{lemma}\label{lemma:non-expansive}
    Let $M \in \mathbb{R}^{m \times m}$ be a row-stochastic matrix (i.e., the entries of any row are non-negative and sum to $1$). Then $M^\top$ is non-expansive, namely, $\|M^\top x - M^\top y\|_1 \le \|x - y\|_1$ for all $x, y \in \mathbb{R}^m$.
\end{lemma}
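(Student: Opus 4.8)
The plan is to reduce to a single-vector statement and then use the triangle inequality together with the two defining properties of a row-stochastic matrix (nonnegativity and unit row sums). By linearity, $M^\top x - M^\top y = M^\top(x-y)$, so it suffices to prove the homogeneous bound $\|M^\top v\|_1 \le \|v\|_1$ for every $v \in \mathbb{R}^m$; applying this to $v = x - y$ then yields the claim immediately.

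To establish the homogeneous bound, I would compute $\|M^\top v\|_1$ entrywise. Writing the $j$-th coordinate as $(M^\top v)_j = \sum_{i} M_{ij}\, v_i$, the triangle inequality combined with the nonnegativity $M_{ij}\ge 0$ gives
\[
\|M^\top v\|_1 = \sum_j \Bigl| \sum_i M_{ij}\, v_i \Bigr| \;\le\; \sum_j \sum_i M_{ij}\, |v_i|.
\]
The next step is to interchange the order of summation (a finite sum, so this is unconditional) to obtain $\sum_i |v_i| \bigl(\sum_j M_{ij}\bigr)$, and then invoke the row-stochasticity assumption, $\sum_j M_{ij} = 1$ for each $i$, which collapses the inner sum to $1$ and leaves exactly $\sum_i |v_i| = \|v\|_1$.

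The only point that needs genuine care — and the one I would flag as the place to avoid an index slip — is the bookkeeping introduced by the transpose. The coordinates of $M^\top v$ pair the index $j$ with the \emph{columns} of $M^\top$, which are the \emph{rows} of $M$; consequently the sum that equals $1$ is the row sum $\sum_j M_{ij}$ of the original matrix, not a column sum. Equivalently, one may phrase the whole argument through the induced operator norm $\|A\|_{1\to 1} = \max_j \sum_i |A_{ij}|$ (the maximum absolute column sum): since the columns of $M^\top$ are the rows of $M$, each having nonnegative entries summing to $1$, we get $\|M^\top\|_{1\to 1} = 1$, which is precisely non-expansiveness. I expect no substantive obstacle beyond this indexing subtlety, as the result is a standard consequence of the triangle inequality and Fubini for finite sums.
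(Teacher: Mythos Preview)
Your proof is correct and follows essentially the same approach as the paper's: both apply the triangle inequality together with nonnegativity of the entries, swap the finite sums, and then use the unit row sums of $M$ to collapse to $\|x-y\|_1$. Your preliminary reduction to a single vector $v=x-y$ and the alternative phrasing via $\|M^\top\|_{1\to 1}=1$ are minor stylistic additions, not a different argument.
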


\begin{proof}
    Let $x, y \in \mathbb{R}^m$ be arbitrary. We start by writing
    \[
    \|M^\top x - M^\top y\|_1 = \sum_{i=1}^m \left| \sum_{j=1}^m M^\top_{ij}(x_j-y_j) \right|.
    \]
    Noting that $M^\top_{ij} = M_{ji}$, this becomes
    \[
    \|M^\top x - M^\top y\|_1 = \sum_{i=1}^m \left| \sum_{j=1}^m M_{ji}(x_j-y_j) \right|.
    \]
    
    By the triangle inequality, which states that \(\left|\sum_{j=1}^m a_j\right| \le \sum_{j=1}^m |a_j|\), we have \(\left| \sum_{j=1}^m M_{ji}(x_j-y_j) \right| \le \sum_{j=1}^m M_{ji}|x_j-y_j|\). Hence,
    \[
    \|M^\top x - M^\top y\|_1 \le \sum_{i=1}^m \sum_{j=1}^m M_{ji}|x_j-y_j|.
    \]
    
    Since the sums are finite, we can exchange the order of summation, and we get
    \[
    \sum_{i=1}^m \sum_{j=1}^m M_{ji}|x_j-y_j| = \sum_{j=1}^m |x_j-y_j| \left(\sum_{i=1}^m M_{ji}\right).
    \]
    
    Now, because $M$ is row-stochastic, each row of $M$ sums to $1$. Equivalently, for each fixed $j$, the sum of the entries in the $j$-th column of $M^\top$ equals $1$, namely \(\sum_{i=1}^m M_{ji}=1\). Thus,
    \[
    \|M^\top x - M^\top y\|_1 \le \sum_{j=1}^m |x_j-y_j| = \|x-y\|_1.
    \]
\end{proof}

\section{Additional experiments}

\subsection{Logits: Q-Q plot with a normal distribution}\label{sec:logits-qq-normal-distribution}

\cref{fig:logits-qq-normal-distribution} shows the Q-Q plot of the logits distribution with a normal distribution, from which we learn that the logits are more likely to be distributed as the Student \( t \) distribution rather than the normal distribution, as we hypothesized in \cref{sec:theoretical-results}.

\begin{figure}[htbp]
    \centering
    \includegraphics[width=0.8\textwidth]{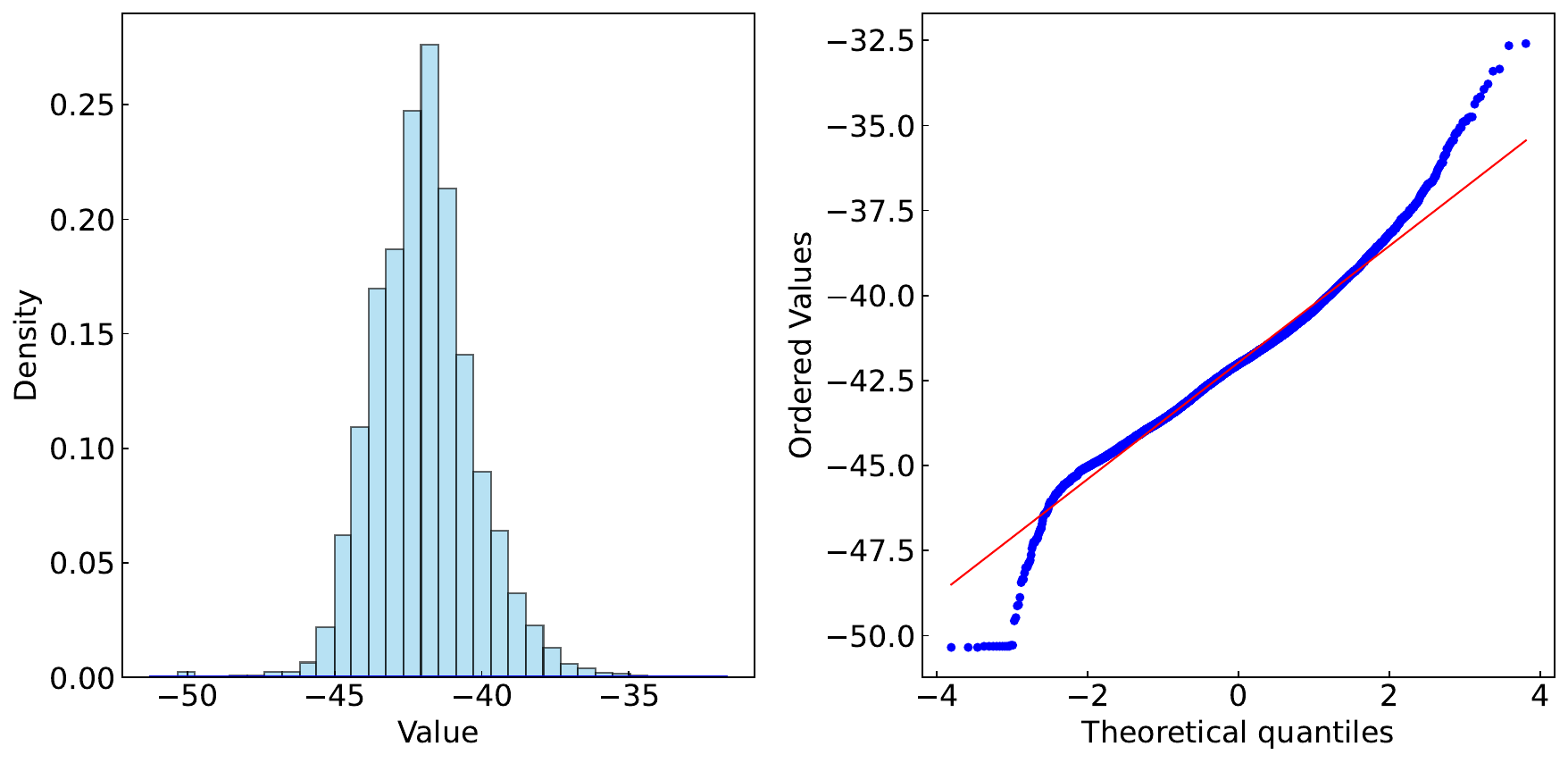}
    \caption{
        (Left) Histogram of observed logits overlaid with the fitted normal PDF. 
        (Right) Q-Q plot comparing empirical logit quantiles (blue) to theoretical normal quantiles (red), showing that a normal does not capture the heavy tails. 
        This confirms that logits follow a Student~\(t\) law more closely than a normal, as hypothesized in \cref{sec:theoretical-results} and confirmed in \cref{fig:logits-qq-t-distribution}.
    }
    \label{fig:logits-qq-normal-distribution}
\end{figure}

\subsection{Output probabilities: Q-Q plots}\label{sec:output-probabilities-qq-plots}

To further assess the adequacy of standard fits, we compare the empirical post-softmax probabilities of \texttt{gpt-2} against a Normal distribution (\cref{fig:probs-qq-normal}) and a Student \(t\) distribution (\cref{fig:probs-qq-t}). In both cases, pronounced deviations in the upper quantiles reveal heavier tails than the Normal and even the \(t\) fit, underscoring our motivation for adopting explicit heavy-tailed priors.

\begin{figure}[htbp]
    \centering
    \includegraphics[width=0.8\textwidth]{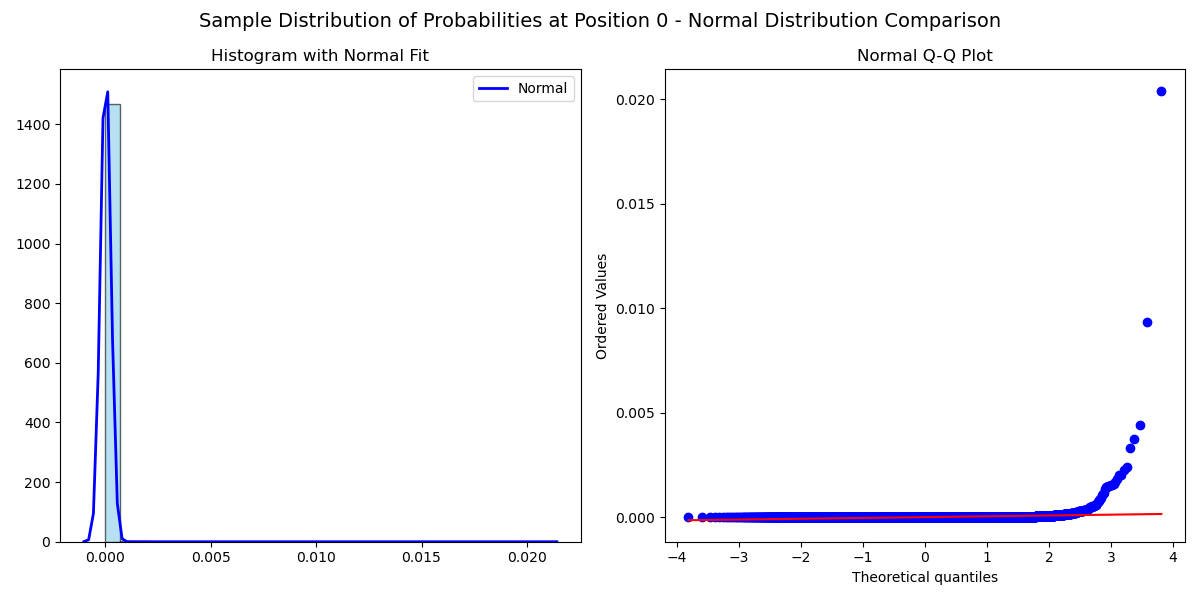}
    \caption{Q-Q plot of \texttt{gpt-2} output probabilities vs.\ a Normal distribution. The systematic departure in the upper tail indicates heavier-tailed behavior than Gaussian.}
    \label{fig:probs-qq-normal}
\end{figure}

\begin{figure}[htbp]
    \centering
    \includegraphics[width=0.8\textwidth]{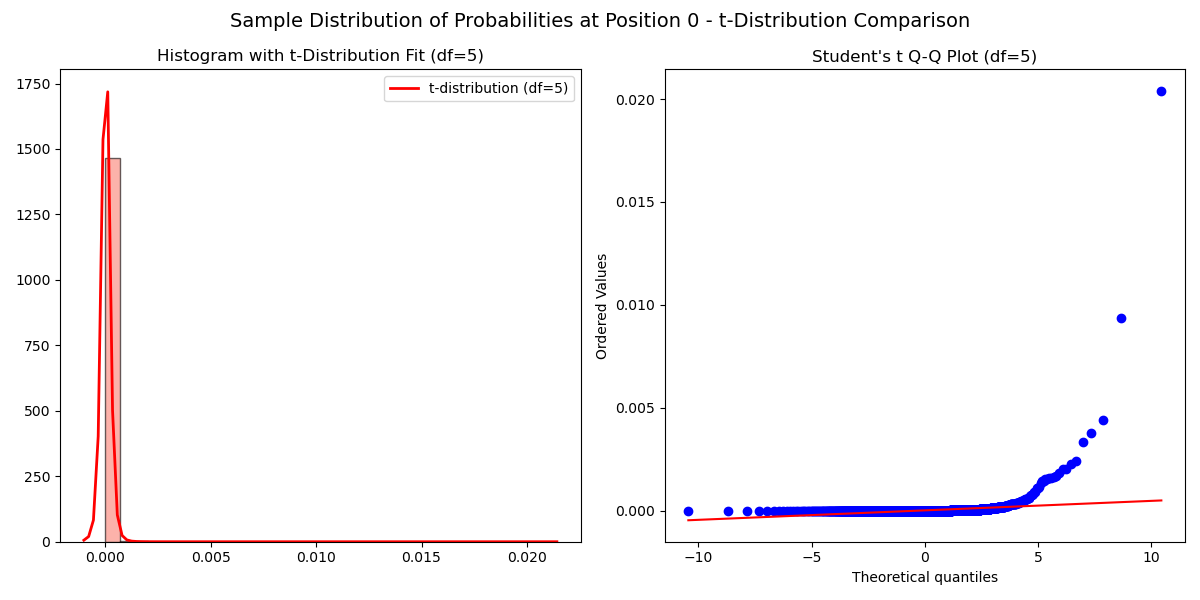}
    \caption{Q-Q plot of \texttt{gpt-2} output probabilities vs.\ a Student \(t\) distribution (df estimated via MLE). Even the \(t\) fit underestimates the extreme quantiles, reinforcing the need for more flexible, heavy-tailed assumptions in our model.}
    \label{fig:probs-qq-t}
\end{figure}

\subsection{Co-occurrence matrix}

To illustrate the prior we use for redistribution, \cref{fig:co-occurrence-matrix} shows a 200\(\times\)200 submatrix of the co-occurrence matrix \(M\) defined in \cref{eq:symbolic-m}.  Each entry \(M_{ij}\) is the normalized covariance between token \(i\) and token \(j\), so rows sum to 1 by construction. The matrix exhibit high sparsity, where the vast majority of entries are near zero, reflecting that most token pairs have negligible co-occurrence.

\begin{figure}[htbp]
    \centering
    \includegraphics[width=0.8\textwidth]{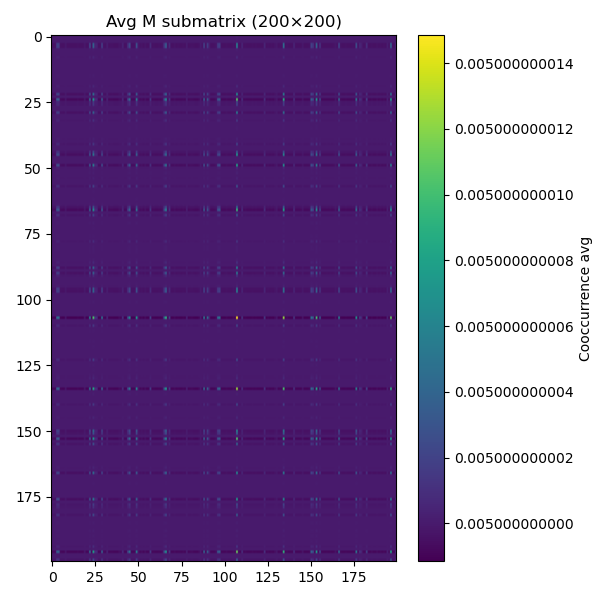}
    \caption{A 200\(\times\)200 block of the co-occurrence matrix \(M\) for our \(t\)-distributed synthetic target.  Most entries are near zero, indicating that token affinities are highly sparse.}
    \label{fig:co-occurrence-matrix}
\end{figure}

\subsection{Acceptance rates of off-the-shelf models}\label{sec:acceptance-rates-of-off-the-shelf-models}

To evaluate the effectiveness of RDK, we measure the acceptance rates over \texttt{Qwen/Qwen3-0.6B-Base}. Under high pruning rates, RDK yields the highest acceptance rate, demonstrating its effectiveness in restoring the target distribution.

\begin{figure}[htbp]
    \centering
    \includegraphics[width=0.8\textwidth]{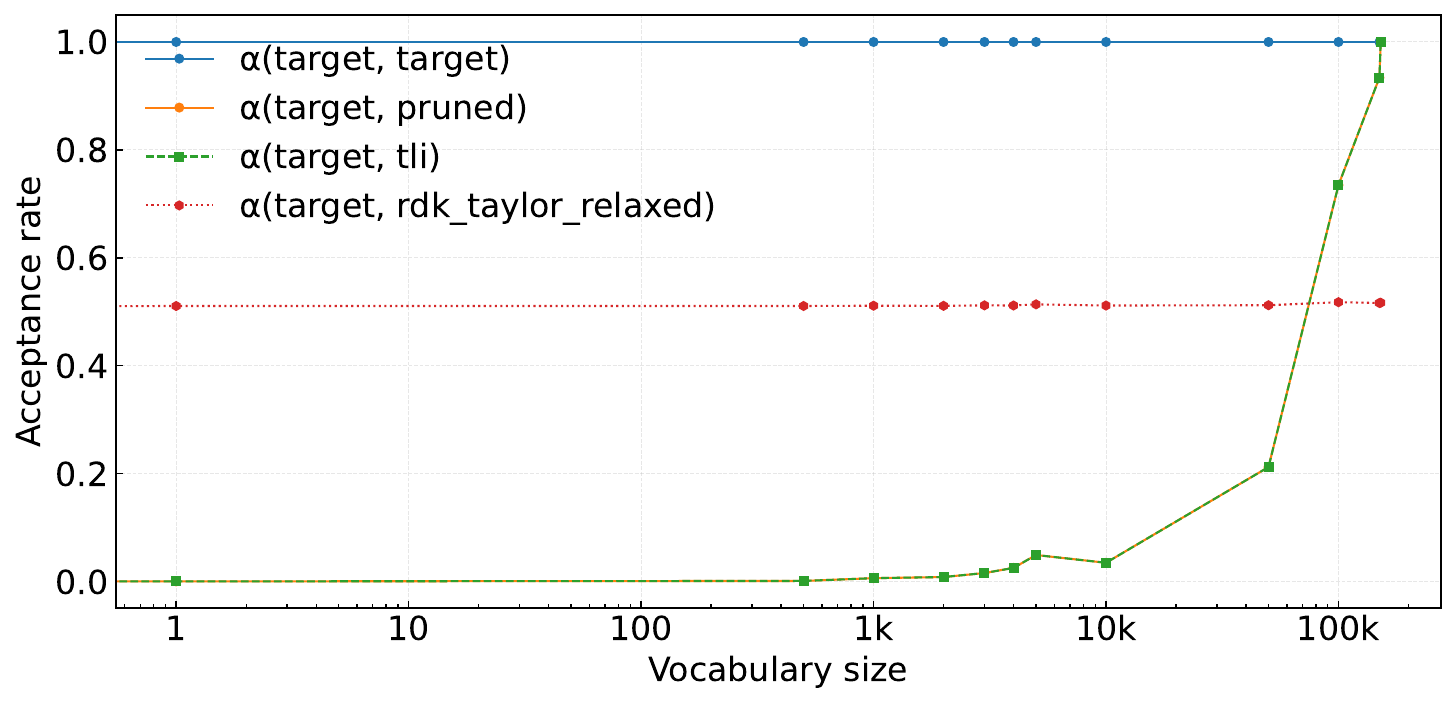}
    \caption{Acceptance rates with off-the-shelf LMs. RDK substantially outperforms both baselines for high pruning rates.}
    \label{fig:acceptance-rates-off-the-shelf}
\end{figure}

\end{document}